\definecolor{light-gray}{gray}{0.85}
\definecolor{dark-gray}{gray}{0.40}
\begin{document}
\title{Statistical Queries and Statistical Algorithms: Foundations and 
Applications\footnote{
This survey was first given as a tutorial on statistical queries at the \emph{29th International Conference on Algorithmic Learning Theory} (ALT)~\citeyear{ALT2018}.
Since then, various researchers have asked about the tutorial slides or noted the slides' usefulness 
in helping them to absorb or teach this material.
Hence, that tutorial has been developed into this survey paper in hopes that it will serve as a useful primer
on this subject.  
To give proper attention to all of the authors of the various results, full author lists for the cited papers were provided
en lieu of the customary \emph{et~al.}\ abbreviation.}}

\author{Lev Reyzin\\ 
\normalsize{Department of Mathematics, Statistics, and Computer Science}\\ 
{\normalsize University of Illinois at Chicago}\\ 
\normalsize{\texttt{lreyzin@uic.edu}}}
\date{}

\newcommand{\E}{\mathbf{E}}
\renewcommand{\Pr}{\mathbf{Pr}}

\newcommand{\SQ}{\mathrm{SQ}}
\newcommand{\err}{\mathrm{err}}
\newcommand{\sqdim}{\mathrm{SQ\mbox{-}DIM}}
\newcommand{\ssqdim}{\mathrm{SSQ\mbox{-}DIM}}
\newcommand{\vcdim}{\mathrm{VC\mbox{-}DIM}}

\newcommand{\poly}{\mathrm{poly}}

\newtheorem{theorem}{Theorem}
\newtheorem{corollary}[theorem]{Corollary}
\newtheorem{definition}[theorem]{Definition}
\newtheorem{lemma}[theorem]{Lemma}
\newtheorem{observation}[theorem]{Observation}
\newtheorem{result}[theorem]{Result}
\newtheorem{problem}[theorem]{Problem}
\theoremstyle{plain}

\maketitle

\begin{abstract}
We give a survey of the foundations of statistical queries and their 
many applications to other areas.
We introduce the model, give the main definitions, and we
explore the fundamental theory statistical queries and 
how how it connects to various notions
of learnability.  We also give a detailed summary of some of the
 applications of statistical queries to other areas, including to optimization, to evolvability, and to differential privacy.
\end{abstract}

\section{Introduction} 

Over 20 years ago, \citet*{Kearns98} 
introduced \emph{statistical queries} as a framework for
designing machine learning algorithms 
that are tolerant to noise. 
The statistical query model restricts a learning algorithm to ask certain types of
queries to an oracle that responds with approximately correct answers.
This framework has has proven useful, not only for designing noise-tolerant 
algorithms, but also for its connections to other noise models, for
its ability to capture many of our current techniques, and for its explanatory
power about the hardness of many important problems.

Researchers have also found many connections between statistical queries
and a variety of modern topics, including
to evolvability, differential privacy, and adaptive data analysis. 
Statistical queries are now both an important tool and remain
a foundational topic with many important questions.  The aim of this survey 
is to illustrate these connections and bring researchers to the forefront of our
understanding of this important area.

We begin by formally introducing the model and giving the main definitions (Section~\ref{sec:modeldef}),
we then move to exploring the fundamental theory of learning statistical queries and 
how how it connects to other notions
of learnability (Section~\ref{sec:bounds}).  Finally, we explore many of the
other applications of statistical queries, including to optimization, to evolvability, and to differential privacy (Section~\ref{sec:applications}).

\section{Model, definitions, and basic results}\label{sec:modeldef}

Statistical query learning traces its origins to the \emph{Probably Approximately Correct} (PAC)
learning model of \citet*{Valiant84}.  The PAC model defines the basic supervised learning framework
used in machine learning.  We begin with its definition.\\

\begin{definition}[efficient PAC learning]\label{def:pac}
Let $C$ be a class of boolean functions $c : X \rightarrow \{-1, 1\}$. We say that
$C$ is \underline{efficiently PAC-learnable} if there exists an algorithm $\mathcal{A}$ such that
for every $c \in C$,
any probability distribution $D_X$ over $X$, and any $0 < \epsilon, \delta < 1$, 
$\mathcal{A}$ 
takes a labeled sample $S$ of size $m = \mathrm{poly}(1/\epsilon,1/\delta,n,|c|)$  
from\footnote{$n = |x|$} $D$, outputs a hypothesis $h$ for which
$$\Pr_{S\sim D}[\err_D(h) \le \epsilon] \ge 1-\delta$$
in time polynomial in $m$.
\end{definition}

A useful way to think about the statistical query (SQ) framework is as a restriction on the algorithm $\mathcal{A}$ in the definition above. 
In the SQ model, the learner access to an oracle instead of to a set $S$ of labeled examples.  

The oracle accepts query functions and tolerances, which together are called a statistical query. 
To define the model, we first make this notion presice.\\

\begin{definition}[statistical query]
A \underline{statistical query} is a pair $(q, \tau)$ with
\begin{itemize}
\item[$q$:] a function $q: X \times \{-1,1\} \rightarrow \{-1,1\}$.
\item[$\tau$:] a tolerance parameter $\tau \ge 0$.
\end{itemize}
\end{definition}

\noindent Now we are ready to define the statistical query oracle.\\

\begin{definition}[statistical query oracle]
The \underline{statistical query oracle}, SQ$(q,\tau)$, when given a statistical query, returns any value in the range: $$\left[\E_{x \sim D}[q(x,c(x)] - \tau, \E_{x \sim D}[q(x,c(x)]  + \tau\right].$$
\end{definition}

\noindent Finally, we can give the definition of efficient statistical query learning.\\

\begin{definition}[efficient SQ learning]\label{def:sqlearning}
Let $C$ be a class of boolean functions $c : X \rightarrow \{-1, 1\}$. 
We say that $C$ is \underline{efficiently SQ-learnable} if there exists an algorithm $\mathcal{A}$ such that
for every $c \in C$, any probability distribution $D$, and any $\epsilon > 0$, there is a polynomial $p(\cdot,\cdot,\cdot)$ 
such that \begin{enumerate}
\item $\mathcal{A}$ makes at most $p(1/\epsilon,n,|c|)$ calls to the SQ oracle,
\item the smallest $\tau$ that $\mathcal{A}$ uses satisfies ${1}/{\tau} \le  p(1/\epsilon,n,|c|)$, and 
\item the queries $q$ are evaluable in time 
 $p(1/\epsilon,n,|c|)$,
\end{enumerate}
and $\mathcal{A}$
outputs a hypothesis $h$ satisfying
err$_D(h) \le \epsilon$.
\end{definition}
Note that unlike Definition~\ref{def:pac}, this definition has no failure parameter $\delta$.  That is because in PAC learning, it is possible to get an uninformative sample, whereas
the SQ oracle is restricted to \emph{always} answer queries within a given range.

\subsection{Simulating by algorithms that draw a sample}

It is not hard to see that a statistical query algorithm can be simulated in the PAC model, which makes
SQ a natural restriction of PAC.  In particular one can simulate
an SQ oracle in the PAC model by drawing  $m = O\left(\frac{\log(k/\delta)}{\tau^2}\right)$ 
samples for each of the $k$ statistical queries,
and by the Hoeffding bound, the simulation will fail with probability $< \delta$.  This leads to the 
following observation.\\

\begin{observation}
If a class of functions is efficiently SQ-learnable, then it is efficiently PAC learnable.
\end{observation}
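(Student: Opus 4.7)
The plan is to build a PAC learner $\mathcal{A}'$ that runs the given efficient SQ learner $\mathcal{A}$ and intercepts each of its oracle calls, replacing the oracle's response with an empirical estimate computed from a fresh batch of labeled examples drawn from $D$. Since Definition~\ref{def:sqlearning} carries no confidence parameter, the only source of failure in the simulation is that some query estimate falls outside its tolerance; I would control this by a union bound and match it to the PAC parameter $\delta$.

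Concretely, let $p(\cdot,\cdot,\cdot)$ be the polynomial from Definition~\ref{def:sqlearning}, so that $\mathcal{A}$ makes at most $k \le p(1/\epsilon,n,|c|)$ queries, all with tolerance $\tau \ge 1/p(1/\epsilon,n,|c|)$. For each query $(q_i,\tau_i)$ issued by $\mathcal{A}$, I would draw an i.i.d.\ labeled sample $S_i$ of size $m_i = O\!\left(\log(k/\delta)/\tau_i^2\right)$ from $D$ and return $\hat{q}_i = \frac{1}{|S_i|}\sum_{(x,y)\in S_i} q_i(x,y)$. Because $q_i$ takes values in $\{-1,1\}$, Hoeffding's inequality gives
\[
\Pr\!\left[\,\bigl|\hat{q}_i - \E_{x\sim D}[q_i(x,c(x))]\bigr| > \tau_i\,\right] \le \frac{\delta}{k},
\]
for an appropriate choice of the constant hidden in $m_i$. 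A union bound over the $k$ queries then shows that, with probability at least $1-\delta$, every returned estimate lies in the admissible range $[\E[q_i(x,c(x))]-\tau_i,\, \E[q_i(x,c(x))]+\tau_i]$, so the intercepted responses constitute a legal SQ-oracle transcript. Conditioned on this event, $\mathcal{A}$ outputs a hypothesis $h$ with $\err_D(h) \le \epsilon$, which is exactly the PAC guarantee required by Definition~\ref{def:pac}.

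It remains to verify efficiency. The total sample size is at most $\sum_{i=1}^k m_i \le k \cdot O(\log(k/\delta)/\tau_{\min}^2)$, which is $\poly(1/\epsilon, 1/\delta, n, |c|)$ by the SQ polynomial bounds on $k$ and $1/\tau_{\min}$. Each query is evaluable in polynomial time by hypothesis, so computing $\hat{q}_i$ takes $\poly(m_i)$ time, and $\mathcal{A}$'s own running time is polynomial in the same parameters.

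The only subtle point is that $\mathcal{A}$ may be adaptive, meaning the $i$-th query depends on the answers to the first $i-1$. This is not actually an obstacle: the union bound needs only that the total number of queries is bounded by the fixed polynomial $k$, and the Hoeffding tail applies to each fresh sample $S_i$ regardless of how $q_i$ was chosen, since $S_i$ is drawn independently of the history. I would avoid the alternative of reusing a single sample across queries, which would require a uniform-convergence argument and the VC machinery; the fresh-sample strategy gives the cleanest derivation of the claimed observation.
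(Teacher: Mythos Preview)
Your proposal is correct and matches the paper's approach exactly: the paper also simulates each of the $k$ SQ oracle calls by drawing $O(\log(k/\delta)/\tau^2)$ fresh samples per query and appeals to Hoeffding plus a union bound to ensure all estimates fall within tolerance with probability at least $1-\delta$. Your treatment is simply more explicit about efficiency and adaptivity than the paper's one-sentence sketch.
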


More importantly, learnability with statistical queries is also related to learnability under the classification noise model of \citet*{AngluinL87}.\\

\begin{definition}[classification noise]
A PAC learning algorithm under random \underline{classification noise}
must meet the PAC requirements, but the label of 
each training sample is flipped with independently with probability $\eta$, for $0 \le \eta < 1/2$.  The sample size
and running time must also depend polynomially on $1/(1-2\eta)$.
\end{definition}

This leads us to the following surprising theorem, which shows that any statistical query algorithm
can be converted into a PAC algorithm under classification noise.\\

\begin{theorem}[\citet*{Kearns98}]
If a class of functions is efficiently SQ-learnable, then it is efficiently learnable in the noisy PAC model. 
\end{theorem}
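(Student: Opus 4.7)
The plan is to show how to simulate each SQ oracle call using samples drawn from the classification noise oracle, at polynomial overhead in $1/\epsilon$, $n$, $|c|$, and $1/(1-2\eta)$.

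First I would exploit the fact that $q : X \times \{-1,1\} \to \{-1,1\}$, so we can decompose it as
\begin{equation*}
q(x,y) \;=\; p(x) \,+\, y \cdot r(x), \qquad p(x) := \tfrac{q(x,1)+q(x,-1)}{2}, \quad r(x) := \tfrac{q(x,1)-q(x,-1)}{2}.
\end{equation*}
Consequently $\E_{x \sim D}[q(x,c(x))] = \E[p(x)] + \E[c(x)\,r(x)]$. The first term depends only on $x$, so it can be estimated to additive error $\tau/2$ from unlabeled samples (noise never affects $x$) via a standard Hoeffding bound using $O(\log(1/\delta)/\tau^2)$ examples. For the second term, if $y'$ denotes the noisy label returned by the oracle, then a short calculation gives
\begin{equation*}
\E[y'\,r(x)] \;=\; (1-\eta)\E[c(x)r(x)] + \eta\,\E[-c(x)r(x)] \;=\; (1-2\eta)\,\E[c(x)r(x)].
\end{equation*}
Hence $\E[c(x)r(x)] = \E[y'\,r(x)]/(1-2\eta)$, which we can estimate by $\widehat{\E}[y'r(x)]/(1-2\eta)$. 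Since dividing by $1-2\eta$ amplifies error, I would draw enough noisy examples to estimate $\E[y'r(x)]$ to tolerance $\tau(1-2\eta)/2$; Hoeffding says $O(\log(1/\delta)/(\tau(1-2\eta))^2)$ samples suffice. Combining the two estimates gives an answer within $\tau$ of $\E[q(x,c(x))]$ with probability at least $1-\delta$. A union bound over all $k = \poly(1/\epsilon, n, |c|)$ queries costs only a $\log k$ factor.

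The main obstacle is that the noise rate $\eta$ is not known to the learner; the naive simulation above requires dividing by $1-2\eta$. The standard fix is to run the SQ simulation with each of a polynomial-size grid of guesses $\hat\eta \in \{0, \gamma, 2\gamma, \dots, 1/2 - \gamma\}$ for a sufficiently small $\gamma = \poly(\epsilon, 1/n, 1/|c|)$, producing a list of candidate hypotheses $h_1, \dots, h_T$. When $\hat\eta$ is close enough to the true $\eta$, every simulated SQ answer is within $\tau$ of the correct value, so the corresponding run of $\mathcal{A}$ outputs a hypothesis of error at most $\epsilon$.

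Finally I would select among the candidates using a hypothesis-testing step: draw a fresh set of noisy examples and, for each $h_i$, estimate its agreement rate with the noisy labels; since a hypothesis with true error $\err_D(h_i)$ has noisy-disagreement rate $(1-\eta)\err_D(h_i) + \eta(1-\err_D(h_i))$, this quantity is monotone in $\err_D(h_i)$, so picking the $h_i$ that best agrees (with a Hoeffding-level sample) yields one whose true error is within $\epsilon$ of the best candidate's error, hence at most $O(\epsilon)$. A final rescaling of $\epsilon$ by a constant completes the argument. Total sample complexity and running time are polynomial in $1/\epsilon$, $n$, $|c|$, and $1/(1-2\eta)$, as required.
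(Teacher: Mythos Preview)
Your approach is correct and essentially the same as the paper's: your algebraic decomposition $q(x,y)=p(x)+y\,r(x)$ is exactly the paper's split of the sample into $S_{\mathrm{clean}}$ (where $q(x,1)=q(x,-1)$, i.e.\ $r(x)=0$) and $S_{\mathrm{noisy}}$ (where $r(x)\neq 0$), with the same $(1-2\eta)^{-1}$ correction applied to the label-dependent part. Your proposal is in fact more complete than the paper's sketch, which explicitly assumes $\eta$ is known; you additionally handle unknown $\eta$ via the standard grid search over $\hat\eta$ followed by hypothesis selection on fresh noisy data.
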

\begin{proof}
For each of $k$ queries, $q(.,.)$, with tolerance $\tau$, let 
Let $P = \E_{x \sim D}[q(x,c(x))]$. We estimate $P$ with $\hat{P}$ as follows.

First, draw a sample set $S$, with 
$|S|= \mathrm{poly}\left({1}/{\tau}, {1}/{1-2\eta}, \log {1}/{\delta}, \log k \right)$
sufficing.
Given $q$, we separate $S$ into two parts\footnote{Note that this does not require knowing the labels of the examples.}: 
\begin{align*}
S_\mathrm{clean} =&\ \{x \in S\ |\ q(x,0) = q(x,1)\} \\
S_\mathrm{noisy}  =&\ \{x \in S\ |\ q(x,0) \neq q(x,1)\}.
\end{align*}

Then, we estimate $q$ on both the parts, with 
\begin{align*}
\hat{P}_\mathrm{clean} =&\ \frac{\sum_{x \in S_\mathrm{clean}} q(x,\ell(x))}{\left| S_\mathrm{clean} \right|}\\
\hat{P}_\mathrm{noisy} =&\ \frac{\sum_{x \in S_\mathrm{noisy}} q(x,\ell(x))}{\left| S_\mathrm{noisy} \right|}.
\end{align*}
Finally, since we know the noise rate $\eta$, we can undo the noise on the noisy part and combine the estimate:
$$\hat{P} = \frac{\hat{P}_\mathrm{noisy} - \eta}{1-2\eta}\left(\frac{|S_\mathrm{noisy}|}{|S|}\right) + 
\hat{P}_\mathrm{clean}\left(\frac{|S_\mathrm{clean}|}{|S|}\right).$$
By the Hoeffding and union bound, we can show that $\hat{P}$ is within $\tau$ of $P$ with probability 
at least $1-\delta$ for all $k$ queries for the $|S|$ as chosen above.
 \end{proof}

Therefore, the SQ framework gives us a way to design algorithms that are also noise-tolerant under some notions
of noise.  In addition, SQ learnability also gives results for learning in the malicious noise model of \citet*{Valiant85},
for example as illustrated in the following Theorem.\\

\begin{theorem}[\citet*{AslamD98a}]
If a class of functions is efficiently SQ-learnable, then it is efficiently PAC learnable under malicious noise
with noise rate $\eta = \tilde{O}(\epsilon)$.
\end{theorem}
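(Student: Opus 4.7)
The plan is to simulate the SQ oracle using samples from the malicious-noise oracle and bound how much the adversary can shift each empirical estimate. Write the noisy distribution as the mixture $D_\eta = (1-\eta) D + \eta D_{\mathrm{adv}}$ for some adversarial distribution $D_{\mathrm{adv}}$ over $X \times \{-1,1\}$. Given a query $(q,\tau)$, draw $m$ labeled examples from $D_\eta$ and form $\hat P = \frac{1}{m}\sum_i q(x_i,\ell_i)$. Letting $P = \E_{x\sim D}[q(x,c(x))]$, a direct calculation gives $\E[\hat P] = (1-\eta) P + \eta P_{\mathrm{adv}}$ for some $P_{\mathrm{adv}} \in [-1,1]$, so $|\E[\hat P] - P| \le 2\eta$. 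By the Hoeffding bound and a union bound over $k$ queries, choosing $m = O(\log(k/\delta)/\tau^2)$ keeps every $\hat P$ within $\tau/2$ of its expectation, so the total deviation from the true SQ value is at most $\tau/2 + 2\eta$.

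Hence the simulation answers SQ queries of tolerance $\tau$ correctly whenever $\eta \le \tau/4$. Plugging in the SQ algorithm's minimum tolerance $\tau_{\min}$ produces a malicious-noise PAC learner tolerating $\eta = \Omega(\tau_{\min})$. Since only $1/\tau_{\min} \le \poly(1/\epsilon, n, |c|)$ is promised, however, this naive approach tolerates only $\eta = 1/\poly(1/\epsilon)$, which is much weaker than the claimed $\tilde O(\epsilon)$.

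To reach the promised rate, I would pass through an SQ-based boosting argument. Any SQ learner can be used as (or collapsed to) a weak SQ learner that achieves error $\frac{1}{2} - \Omega(1)$ with only constant tolerance, and boosting for $T = O(\log(1/\epsilon))$ rounds then yields an $\epsilon$-accurate hypothesis. Each round's reweighted distribution $D_t$ can be implemented through SQ queries on the original $D$: a bounded reweighting $w_t(x) \le W$ converts a tolerance-$\tau_t$ query on $D_t$ into a tolerance-$\Omega(\tau_t/W)$ query on $D$ (after splitting into the numerator $\E_D[w_t q]$ and the normalizer $\E_D[w_t]$). With a smooth boosting scheme satisfying $W = O(1/\epsilon)$, the effective minimum tolerance asked of the noisy simulator is $\tilde\Omega(\epsilon)$, and by the first step the simulation tolerates malicious noise of rate $\tilde O(\epsilon)$.

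The main technical obstacle is precisely the smoothness requirement on the booster. Aggressive boosters such as AdaBoost produce exponentially large weights on repeatedly misclassified examples, and under malicious noise the adversary can direct its $\eta$-fraction of examples at exactly those hard regions, concentrating its mass into an overwhelming share of the boosted distribution and wrecking the simulation. The crucial part of the proof is therefore invoking or designing a smooth booster whose per-example weight stays $O(1/\epsilon)$ uniformly while still converging in $O(\log(1/\epsilon))$ rounds, so that the adversary's mass on each $D_t$ remains $\tilde O(\eta)$ and the simulation error from the first paragraph does not blow up.
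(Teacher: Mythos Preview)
The paper does not prove this theorem; it is stated as a citation to \citet*{AslamD98a} and no argument is given in the text.  So there is no ``paper's own proof'' to compare against.

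That said, your outline is essentially the Aslam--Decatur strategy.  The two ingredients you identify are the right ones: (i) simulate each SQ from a noisy sample, where the malicious mixture $D_\eta=(1-\eta)D+\eta D_{\mathrm{adv}}$ shifts each expectation by at most $2\eta$, and (ii) avoid ever asking for very small tolerances by running a \emph{smooth} booster on top of a weak SQ learner, so that the per-example reweighting is bounded by $O(1/\epsilon)$ and the adversary's mass on each boosted distribution stays $O(\eta/\epsilon)$.  Your diagnosis that aggressive boosters (AdaBoost) fail here because the adversary can concentrate on the exponentially up-weighted region is exactly the point of the original paper.

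One technical slip: you cannot in general ``collapse'' the SQ learner to a weak learner with \emph{constant} advantage and \emph{constant} tolerance; the weak-learning edge $\gamma$ is only $1/\poly(n,|c|)$ in the worst case, and the tolerance of the weak learner must be $O(\gamma)$.  The smooth-boosting analysis then yields a tolerable malicious rate of order $\epsilon\gamma/\mathrm{polylog}$ rather than literally $\Theta(\epsilon)$.  The survey's $\tilde O(\epsilon)$ is hiding this dependence (or treating it as a fixed polynomial in the other parameters).  Apart from that overstatement, your plan is correct and matches the original approach.
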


\subsection{Variants of SQs}

One natural restriction of statistical queries was defined by \citet*{BshoutyF02},
who modified the oracle to only output the approximate correlation between a query and the target function.
In this \emph{correlational statistical query} (CSQ) model, the oracle is weaker, but the learning criterion is the same as for statistical queries, as in Definition~\ref{def:sqlearning}.

\begin{definition}[correlational statistical query oracle]
Given a function $h = X \rightarrow \{-1,1\}$ and a tolerance parameter $\tau$,
the \underline{correlational statistical query oracle} CSQ$(h,\tau)$ returns a value
in the range  $$[\E_D[h(x)c(x)] - \tau, \E_D[h(x)c(x)] + \tau].$$
\end{definition}

The correlational statistical query oracle above gives \emph{distances} between the hypothesis and a target function.
This is equivalent to the ``Learning by Distances" model of \citet*{Ben-DavidIK95}, who defined their model
independently of~\citet*{Kearns98}.

Another natural way to define statistical queries presented by \citet*{Yang05} is via the \emph{honest statistical query} (HSQ) oracle.  This oracle
samples the distribution and honestly computes approximate answers.\\

\begin{definition}[honest statistical query oracle]
Given function $q: X \times \{-1,1\} \rightarrow \{-1,1\}$ and sample size $m$, the 
\underline{honest statistical query oracle} HSQ$(q,s)$ draws $\left(x_1, \ldots , x_m\right) \sim D^m$ and returns the empirical average
 $$\frac{1}{m}\sum_{i=1}^m q(x_i,c(x_i)).$$
\end{definition}

The definition of honest statistical query learning is again similar to Definition~\ref{def:sqlearning}, but needs some modification
to work with the HSQ oracle.
First, instead of bounding $1/\tau$, the largest sample size $m$ needs to be bounded by a polynomial.  Also, because of the sampling 
procedure, a failure parameter $\delta$ needs to be (re-)introduced, and the learner is required to also be polynomial in ${1}/{\delta}$.

Note that because the CSQ oracle is weaker, any lower bound against SQ algorithms also holds against CSQ algorithms.
On the other hand, the HSQ oracle is arguably stronger and cannot answer adversarially; hence, SQ
algorithms can be easily adapted to give HSQ guarantees.

\section{Bounds for SQ algorithms}\label{sec:bounds} 

We now examine some fundamental theory for statistical query algorithms, beginning with information-theoretic lower bounds that hold
against statistical query algorithms.

\subsection{Lower bounds}

The main tool for proving statistical query lower bounds is called statistical query dimension.  We present it and variants of it in the following section.

\subsubsection{Statistical query dimension}

A quantity called the {statistical query dimension} \citep*{BlumFJKMR94} 
controls the complexity of statistical query learning. \\

\begin{definition}[statistical query dimension]\label{def:sqdim}
For a concept class $C$ and distribution $D$, the {statistical query} {dimension of $C$ with
respect to $D$}, denoted 
$\sqdim_D(C)$, 
is the largest number $d$ such that $C$ contains $d$ functions
$f_1, f_2, \ldots, f_d$ such that for all $i \neq j,$ $\left|\left<f_i,f_j\right>_D\right|  \le 1/d.$
Note: $\left<f_i,f_j\right>_D =  \E_D[f_i \cdot f_j ].$

When we leave out the distribution $D$ as a subscript, we refer to the statistical query dimension with respect to the worst-case
distribution
$$\sqdim(C) = \max_{D \in \mathcal{D}} \left( \sqdim_D(C) \right).$$ 
\end{definition}

This quantity is important due to the following theorem.\\

\begin{theorem}[\citet*{BlumFJKMR94}]\label{thm:sqdimlb}
Let $C$ be a concept class and let $d = \sqdim_D(C)$. Then any SQ learning
algorithm that uses a tolerance parameter lower bounded by $\tau > 0$ must make
at least $(d \tau^2 -1)/2$ queries to learn $C$ with accuracy at least $\tau$.
In particular, when $\tau = 1/d^{1/3}$, this means $(d^{1/3} - 1)/2$ queries are needed.
\end{theorem}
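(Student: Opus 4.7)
The plan is an adversary argument exploiting the nearly orthogonal family $f_1, \ldots, f_d \in C$ guaranteed by $\sqdim_D(C) = d$. The adversary commits up front to a strategy in which every query is answered with a single canonical ``default'' value, chosen so that the answer is within tolerance $\tau$ for all but a small number of the candidate targets $f_i$. At the end of the interaction the adversary is then free to declare any surviving candidate to be the true target and, in particular, to choose one on which the algorithm's output hypothesis fails.

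First I would linearize the queries. For $q : X \times \{-1,1\} \to \{-1,1\}$, write
$$q(x,y) = g_0(x) + y\, g_1(x) \quad \text{for } y \in \{-1,1\},$$
with $g_0(x) = (q(x,1)+q(x,-1))/2$ and $g_1(x) = (q(x,1)-q(x,-1))/2$, so $g_0, g_1 \in [-1,1]$ and in particular $\|g_1\|_D \le 1$. Then $\E_D[q(x,f_i(x))] = \E_D[g_0] + \langle f_i, g_1\rangle_D$, so the fixed answer $v = \E_D[g_0]$ is within $\tau$ of the truth exactly for those $i$ with $|\langle f_i, g_1\rangle_D| \le \tau$.

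Next I would bound the number of ``bad'' indices per query. Let $B = \{i : |\langle f_i, g_1\rangle_D| > \tau\}$, $k = |B|$, choose signs $\epsilon_i = \mathrm{sign}\,\langle f_i, g_1\rangle_D$, and set $v = \sum_{i \in B} \epsilon_i f_i$. On the one hand $\langle v, g_1\rangle_D > k\tau$, while on the other hand
$$\|v\|_D^2 = \sum_{i \in B}\|f_i\|_D^2 + \sum_{i \neq j \in B} \epsilon_i \epsilon_j \langle f_i, f_j\rangle_D \le k + k(k-1)/d,$$
using $\|f_i\|_D = 1$ and $|\langle f_i, f_j\rangle_D| \le 1/d$. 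Cauchy--Schwarz then gives $k\tau < \|v\|_D$; squaring and rearranging yields $k(\tau^2 - 1/d) < 1$, and hence $k < d/(d\tau^2 - 1)$. So each query rules out strictly fewer than $d/(d\tau^2-1)$ of the candidates.

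Finally I would assemble the pieces. Over $m$ queries the total number of eliminated indices is at most $m\cdot d/(d\tau^2-1)$, and an essentially identical orthogonality argument applied to the output hypothesis $h$ (treated as a $\pm 1$ function and playing the role of $g_1$) shows that $h$ can achieve accuracy $\tau$ against at most $d/(d\tau^2 - 1)$ of the $f_i$. Hence as long as $m < (d\tau^2-1)/2$ there must remain a surviving candidate on which $h$ fails the accuracy guarantee, and the adversary designates that candidate as the true target. I expect the main obstacle to be the per-query bound: the $f_i$ are only approximately orthogonal, so one must carefully track the $k(k-1)/d$ cross-term in $\|v\|_D^2$, and it is precisely that term that forces the $d/(d\tau^2-1)$ shape and, through it, the final $d\tau^2$-scaling of the lower bound.
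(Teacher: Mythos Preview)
Your proposal is correct and is essentially the same argument the paper presents (the Sz\"or\'enyi proof): an adversary answers each query with a default value, and a Cauchy--Schwarz bound on $\|\sum_{i} f_i\|^2$ shows that only $O(d/(d\tau^2-1))$ candidates can be eliminated per query. Two small refinements in your version are worth noting: first, the paper's proof is given only for CSQs, whereas your linearization $q(x,y)=g_0(x)+y\,g_1(x)$ extends it directly to general SQs (the paper achieves this separately via the Bshouty--Feldman decomposition lemma stated just after the theorem); second, by inserting the signs $\epsilon_i$ you handle the sets $A$ and $A'$ together and get $|B|<d/(d\tau^2-1)$ rather than the paper's $|A|+|A'|\le 2d/(d\tau^2-1)$, which in fact yields a bound about twice as strong as the one stated.
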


\begin{proof}
The original proof is a bit too technical to present here, so instead we'll see a clever, short 
proof of this lower bound for CSQs given by~\citet*{Szorenyi09}.  This proof gives a weaker result than the statement of the theorem
as proven by \citet*{BlumFJKMR94}.

Assume $f_1, \ldots, f_d$ realize the SQ-DIM. Let $h$ be a query and 
$A = \{ i \in [d]: \left< f_i, h \right> \ge \tau \}$. Then by Cauchy-Schwartz, we have
\begin{equation}\label{eq:szorenyibd}
\left< h, \sum_{i \in A} f_i \right>^2  \le  \left|\left| \sum_{i \in A} f_i \right|\right|^2  
=  \sum_{i,j \in A} \left<f_i,f_j \right> 
 \le  \sum_{i \in A} \left( 1+\frac{|A|-1}{d} \right) 
\end{equation}
therefore $$\left< h, \sum_{i \in A} f_i \right>^2 \le |A| + \frac{|A|^2}{d}.$$  But by definition of $A$, we also have 
$$\left< h, \sum_{i \in A} f_i \right> \ge |A|\tau.$$  By algebra, $|A| \le d/(d\tau^2-1)$, and the same bound holds for $A'$ defined with respect to correlation $\le -\tau$.\\

So, no matter what $h$ is asked of the oracle, an answer of $0$ to CSQ$(h,\tau)$ is inconsistent with at most
$|A| + |A'| \le 2d/(d \tau^2-1)$ of the functions $f_i$.  Since $d$ (or, technically, $d-1$) functions need to be eliminated, this implies the desired lower bound.
\end{proof}

We then get the following as an immediate corollary.\\

\begin{corollary}
Let $C$ be a class with $\sqdim_D(C) = \omega(n^k)$ for all $k$, then $C$ is not
efficiently SQ-learnable under $D$.
\end{corollary}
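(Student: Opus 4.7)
The plan is to argue by contraposition using the quantitative lower bound in Theorem~\ref{thm:sqdimlb}. First I would assume, for contradiction, that $C$ is efficiently SQ-learnable under $D$. By Definition~\ref{def:sqlearning}, this supplies a single polynomial $p(1/\epsilon,n,|c|)$ that simultaneously bounds both the number of queries the algorithm makes and the inverse $1/\tau_{\min}$ of its smallest tolerance. In particular, every query the algorithm asks has tolerance at least $1/p$, so the hypothesis of Theorem~\ref{thm:sqdimlb} is satisfied with the choice $\tau = 1/p$.

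Next I would plug $\tau = 1/p$ into the theorem's conclusion. This would force the algorithm to issue at least $(d/p^2 - 1)/2$ queries, where $d = \sqdim_D(C)$. Pairing this lower bound against the polynomial upper bound $p$ on the query count would rearrange to $d \le p^2(2p+1)$, so once $\epsilon$ is pinned to a particular value, $\sqdim_D(C)$ would have to be polynomial in $n$ and $|c|$. This directly contradicts the hypothesis $\sqdim_D(C) = \omega(n^k)$ for every $k$, and the corollary follows.

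The one technical nuance I anticipate is matching the ``accuracy at least $\tau$'' hypothesis of Theorem~\ref{thm:sqdimlb} to the $\epsilon$-accuracy guarantee an efficient SQ learner provides. I would handle this by fixing $\epsilon$ appropriately --- either as a small constant, if the theorem's condition is a weak-learning-style requirement, or as a suitable inverse polynomial in $n$ and $|c|$, if err$_D(h) \le \tau$ is required outright. Because $p$ depends only polynomially on $1/\epsilon$, any such choice still leaves $p$ polynomial in $n$ and $|c|$, so the super-polynomial versus polynomial comparison at the heart of the argument goes through unchanged. Beyond this bookkeeping, the rest is an immediate invocation of Theorem~\ref{thm:sqdimlb}, so I do not expect any serious obstacle.
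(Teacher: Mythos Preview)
Your proposal is correct and is exactly the argument the paper has in mind: the paper states this result as an ``immediate corollary'' of Theorem~\ref{thm:sqdimlb} with no proof given, and what you have written is precisely the natural unpacking of that immediacy. The nuance you flag about the accuracy parameter is handled correctly by fixing $\epsilon$ to a constant, since then $p$ is polynomial in $n$ and $|c|$ alone and the super-polynomial-versus-polynomial comparison goes through.
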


Perhaps surprisingly, for distribution-specific learning, CSQ-learnability is equivalent to SQ-learnability.\\

\begin{lemma}[\citet*{BshoutyF02}] Any SQ 
can be answered by asking two SQs that are independent of the target and two CSQs.
\end{lemma}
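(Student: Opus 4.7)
The plan is to decompose $q(x,y)$ into a target-independent part and a correlation part via the identity (valid for $y\in\{-1,1\}$)
\[
q(x,y) \;=\; q^+(x) + y\,q^-(x), \qquad q^\pm(x) := \tfrac{1}{2}\bigl(q(x,1) \pm q(x,-1)\bigr).
\]
Setting $y=c(x)$ and taking expectations under $D$ gives
\[
\E_D[q(x,c(x))] \;=\; \E_D[q^+(x)] \;+\; \E_D\bigl[c(x)\,q^-(x)\bigr],
\]
so the first summand is target-independent (a candidate for SQs whose query function does not depend on $y$) and the second is a correlation with $c$ (a candidate for CSQs). If this were the whole story, two queries would suffice; the catch, which I expect to be the main obstacle, is that $q^\pm(x)\in\{-1,0,1\}$, whereas both the SQ and the CSQ oracles require $\{-1,1\}$-valued query functions.

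I would resolve this by introducing the auxiliary $\{-1,1\}$-valued function
\[
r(x) \;:=\; \begin{cases} +1 & \text{if } q(x,1)=q(x,-1),\\ -1 & \text{otherwise}.\end{cases}
\]
A two-case check then gives
\[
q^+(x) \;=\; \tfrac{1}{2}\bigl(q(x,1) + q(x,1)\,r(x)\bigr), \qquad q^-(x) \;=\; \tfrac{1}{2}\bigl(q(x,1) - q(x,1)\,r(x)\bigr),
\]
since when $r(x)=+1$ the right-hand sides evaluate to $q(x,1)$ and $0$ respectively (matching $q^+,q^-$ on the set where $q(x,1)=q(x,-1)$), and when $r(x)=-1$ they evaluate to $0$ and $q(x,1)$ (matching on the complementary set, where $q(x,-1)=-q(x,1)$). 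The crucial point is that $q(x,1)$ and $q(x,1)\,r(x)$ are each $\{-1,1\}$-valued functions of $x$ alone, hence legitimate query functions.

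Substituting back produces the four admissible queries
\[
\E_D[q(x,c(x))] \;=\; \tfrac{1}{2}\E_D[q(x,1)] + \tfrac{1}{2}\E_D[q(x,1)r(x)] + \tfrac{1}{2}\E_D[c(x)q(x,1)] - \tfrac{1}{2}\E_D[c(x)q(x,1)r(x)],
\]
where the first two are target-independent SQs and the last two are CSQs with hypotheses $q(x,1)$ and $q(x,1)\,r(x)$. To simulate the original $(q,\tau)$ query, I would ask each of the four with tolerance $\tau/2$; by the triangle inequality and the $\tfrac{1}{2}$ coefficients, the combined estimate is within $4\cdot\tfrac{1}{2}\cdot\tfrac{\tau}{2}=\tau$ of $\E_D[q(x,c(x))]$, completing the simulation.
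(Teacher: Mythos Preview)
Your proof is correct and, in fact, lands on exactly the same four queries as the paper --- you just took a detour to get there. Observe that your auxiliary function satisfies $q(x,1)\,r(x)=q(x,-1)$ identically: if $q(x,1)=q(x,-1)$ then $r(x)=1$ and the product is $q(x,1)=q(x,-1)$; if $q(x,1)\neq q(x,-1)$ then (both being $\pm 1$) $q(x,-1)=-q(x,1)$, $r(x)=-1$, and again the product is $q(x,-1)$. So your four queries are precisely $\tfrac12\E_D[q(x,1)]$, $\tfrac12\E_D[q(x,-1)]$, $\tfrac12\E_D[c(x)q(x,1)]$, and $-\tfrac12\E_D[c(x)q(x,-1)]$.

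The paper reaches these directly, without passing through $q^\pm$ or $r$, via the one-line interpolation
\[
q(x,y)=q(x,-1)\,\tfrac{1-y}{2}+q(x,1)\,\tfrac{1+y}{2}\qquad(y\in\{-1,1\}),
\]
after which setting $y=c(x)$ and expanding immediately gives the four terms above. Since $q(x,1)$ and $q(x,-1)$ are already $\{-1,1\}$-valued functions of $x$ alone, there is no ``obstacle'' to worry about, and no auxiliary function is needed.
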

\begin{proof}
We decompose the SQ into two SQs:
\begin{align}
\E_D[q(x,c(x)] = & \ \E_D\left[q(x,-1)\frac{1-c(x)}{2}+q(x,1)\frac{1+c(x)}{2} \right] \nonumber \\
=&\ \frac{1}{2}\E_D[q(x,1)c(x)] \ - \  \frac{1}{2}\E_D[q(x,-1)c(x)] \label{eq:corr} \ + \\
&\  \frac{1}{2}\E_D[q(x,1)] \ + \ \frac{1}{2}\E_D[q(x,-1)] \label{eq:nolabel}.
\end{align}
Note that the terms in Expression~\ref{eq:corr} are correlational statistical queries and the terms 
in Expression~\ref{eq:nolabel} are statistical queries independent of the label.
\end{proof}

On the other hand, \citet*{Feldman11} showed that CSQs are strictly weaker than SQs for distribution-independent
learning. For example, he showed that half-spaces are not distribution-independently CSQ learnable, but are SQ learnable.


There also exists a similar theorem for honest statistical queries, as given below.  The statement was originally proven
by \citet*{Yang05} and later strengthened by \citet*{FeldmanGRVX17}.\\

\begin{theorem}[\citet*{Yang05}]
Let $C$ be a concept class and let $d = \sqdim(C)$. Then any HSQ learning
algorithm must use a total sample complexity at least $\Omega(d)$ to learn $C$ to constant
accuracy and probability of success.
\end{theorem}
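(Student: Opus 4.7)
The plan is to prove the bound by an information-theoretic argument in the style of Fano: let the adversary pick the target $c$ uniformly from the $d$ nearly-orthogonal witnesses $f_1,\ldots,f_d$ achieving $\sqdim(C)=d$, and bound the mutual information between $c$ and the HSQ transcript by $O(M/d)$. Since $|\langle f_i,f_j\rangle_D|\le 1/d$, any constant-accuracy hypothesis identifies $c$ among the $d$ witnesses (by triangle inequality on $\|f_i-f_j\|_D^2\ge 2(1-1/d)$), so it suffices to lower bound the sample complexity of \emph{identification}, for which Fano demands $\Omega(\log d)$ bits of mutual information.

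The per-query information bound is where the $\sqdim$ structure enters, and reuses the Szorenyi Cauchy--Schwarz inequality~\eqref{eq:szorenyibd}. Decompose each query $q_i$ as $q_i(x,y)=a_i(x)+b_i(x)y$ with $|a_i(x)|+|b_i(x)|\le 1$ pointwise; under target $f_c$, the HSQ answer is (an affine transform of) a $\mathrm{Bin}\bigl(m_i,(1+\mu_{c,i})/2\bigr)$ variable with $\mu_{c,i}=\E_D[a_i]+\langle b_i,f_c\rangle_D$. A direct binomial KL estimate against the baseline $\bar\mu_i=\E_D[a_i]$ gives $\mathrm{KL}\le O\bigl(m_i\langle b_i,f_c\rangle_D^2/(1-|\bar\mu_i|)\bigr)$, and the near-orthogonality of the $f_c$ yields $\sum_c\langle b_i,f_c\rangle_D^2\le 2\|b_i\|_2^2\le 2(1-|\bar\mu_i|)$, the second inequality following from $|a|+|b|\le 1$ pointwise. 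The $(1-|\bar\mu_i|)$ factors cancel, and the average KL per concept is $O(m_i/d)$. Combining via the KL chain rule and the generalized Fano inequality $I(c;Y_{1:k})\le \frac{1}{d^2}\sum_{c,c'}\mathrm{KL}(P_c\|P_{c'})$ yields $I(c;Y_{1:k})\le O(M/d)$, and Fano then forces $M\ge\Omega(d\log d)$, which is in fact stronger than the claimed $\Omega(d)$.

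The main obstacle is carrying the per-query bound cleanly through adaptivity: both the query $q_i$ and the sample size $m_i$ depend on the realized past answers $y_{<i}$, so the decomposition into $a_i^{(y_{<i})},b_i^{(y_{<i})}$ and the choice of reference distribution must be made branch-by-branch inside the chain-rule expansion $\mathrm{KL}(P_c\|P_{c'})=\sum_i\E_{y_{<i}\sim P_c}\mathrm{KL}(P_c(y_i\mid y_{<i})\|P_{c'}(y_i\mid y_{<i}))$. Verifying that the binomial KL estimate $m(p_1-p_2)^2/(p_2(1-p_2))$ remains valid uniformly along branches, and that the cancellation $\sum_c(\mu_{c,i}-\bar\mu_i)^2\lesssim 1-|\bar\mu_i|$ persists at every branch, is the main technical care required; once in place, the generalized Fano calculation is routine.
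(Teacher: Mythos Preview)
The paper does \emph{not} contain a proof of this theorem: it is stated with attribution to \citet*{Yang05} (and noted as strengthened by \citet*{FeldmanGRVX17}), and the text moves immediately to the next subsection. So there is no in-paper argument to compare your proposal against.

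On its own merits, your per-query analysis is essentially correct and is indeed the standard route to HSQ lower bounds: the decomposition $q(x,y)=a(x)+b(x)y$ with $|a|+|b|\le 1$, the binomial/chi-square KL bound, and the near-Bessel inequality $\sum_c\langle b,f_c\rangle_D^2\le 2\|b\|_D^2$ together with $\|b\|_D^2\le 1-|\bar\mu|$ all check out, and for a \emph{single} query they give $I(c;Y)\le O(m/d)$.

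The gap is in the step you yourself flag. In the chain-rule expansion
\[
\mathrm{KL}(P_c\|Q)=\sum_i \E_{y_{<i}\sim P_c}\bigl[\,\mathrm{KL}\bigl(P_c(Y_i\mid y_{<i})\,\|\,Q(Y_i\mid y_{<i})\bigr)\bigr],
\]
the expectation is under the \emph{$c$-dependent} law $P_c$. Your per-branch cancellation $\sum_c(\mu_{c,i}-\bar\mu_i)^2\lesssim 1-|\bar\mu_i|$ is indeed valid at every fixed $y_{<i}$, but to exploit it you need the sum over $c$ and the expectation over $y_{<i}$ to be taken under a \emph{common} measure; as written, they are not, and you cannot simply interchange them. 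Concretely, once the first queries have partially identified $c$, the branch $y_{<i}$ that $P_c$ concentrates on can carry a query $b_i$ aligned with $f_c$, so that the per-branch term for that particular $c$ is $\Theta(m_i)$ rather than $O(m_i/d)$. The ``generalized Fano'' bound $I\le \tfrac{1}{d^2}\sum_{c,c'}\mathrm{KL}(P_c\|P_{c'})$ has the same defect (the inner expectation is still under $P_c$), so switching to pairwise KL does not by itself fix this. Your claim that the argument yields $\Omega(d\log d)$, strictly stronger than the $\Omega(d)$ stated, is a further sign that something in the aggregation step is being over-counted. Resolving adaptivity here genuinely requires an extra idea (e.g.\ comparing to the null-reference transcript via a hypothesis-testing/TV argument rather than raw KL, or an inductive control on the posterior), not just ``care''; the sketch as written does not supply it.
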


\subsubsection{Classes that are not efficiently SQ learnable}\label{sec:notsq}

Given the statistical query dimension lower bounds, we can now say certain classes
of functions are not learnable with statistical queries, begging with a result from the
results in the original paper of \citet*{Kearns98}. \\

 \begin{observation}\label{obs:sqdimparity}
 Parity functions on $\{0,1\}^n$ have  $\sqdim = 2^n$, and therefore, are not efficiently SQ learnable.
\end{observation}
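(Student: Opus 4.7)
The plan is to exhibit $2^n$ parity functions that are pairwise orthogonal under a particular distribution, which immediately shows that $\sqdim$ is at least $2^n$, and then to invoke Theorem~\ref{thm:sqdimlb} to conclude non-learnability. Concretely, for each $S \subseteq \{1,\ldots,n\}$ I would define the parity $\chi_S : \{0,1\}^n \to \{-1,1\}$ by $\chi_S(x) = (-1)^{\sum_{i \in S} x_i}$, giving $2^n$ distinct functions in the class.

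Next I would compute the relevant inner products with respect to the uniform distribution $U$ on $\{0,1\}^n$. Using $\chi_S(x)\chi_T(x) = \chi_{S \triangle T}(x)$ and the standard fact that $\E_U[\chi_R] = 0$ for any nonempty $R$ (because flipping a single bit in $R$ pairs up $+1$ and $-1$ contributions), I would conclude $\langle \chi_S, \chi_T \rangle_U = 0$ whenever $S \neq T$. In particular $|\langle \chi_S, \chi_T\rangle_U| \le 1/2^n$ for all $S \neq T$, so the $2^n$ parities witness $\sqdim_U(C) \ge 2^n$, and hence $\sqdim(C) \ge 2^n$. Since the class only has $2^n$ functions in total, this is tight and gives $\sqdim(C) = 2^n$.

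Finally, I would feed $d = 2^n$ into Theorem~\ref{thm:sqdimlb}. Picking $\tau = 1/d^{1/3} = 2^{-n/3}$, the theorem says at least $(2^{n/3}-1)/2$ queries are needed to achieve accuracy $\tau$. Since efficient SQ-learnability requires only polynomially many queries at an inverse-polynomial tolerance, and here even a tolerance as large as $2^{-n/3}$ forces exponentially many queries, the class of parities cannot be efficiently SQ-learnable.

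There is no real obstacle here: the only subtle step is recognizing that the claim implicitly refers to the worst-case $\sqdim$, so it suffices to produce a \emph{single} bad distribution (the uniform one), and that under the uniform distribution the parities are literally an orthonormal basis of the function space, which is the strongest possible witness.
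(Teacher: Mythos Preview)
Your proposal is correct and follows essentially the same approach as the paper: the paper simply notes that the $2^n$ parity functions $\chi_c(x)=(-1)^{c\cdot x}$ are pairwise orthogonal under the uniform distribution (citing standard Fourier-character orthogonality), which immediately gives $\sqdim = 2^n$. Your version is just a more explicit rendering of that same argument, including the computation $\chi_S\chi_T=\chi_{S\triangle T}$, the tightness observation, and the instantiation of Theorem~\ref{thm:sqdimlb} with $\tau=d^{-1/3}$.
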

 
Parity functions are of the form $\chi_c(x) = (-1)^{c \cdot x}$.  All $2^n$ of them are pairwise orthogonal.
This is known from orthogonality of Fourier characters under the uniform distribution; see the book by \citet*{ODonell14}.
Parities, however, being linear functions, are PAC-learnable using Gaussian elimination, so SQ $\subsetneq$ PAC.
\citep*{BlumFJKMR94}.\\

\begin{observation}
Decision trees on $n$ nodes have $\sqdim \ge n^{c \log n}$, and therefore, are not efficiently SQ learnable.
\end{observation}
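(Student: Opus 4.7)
The plan is to embed a large family of parities into the decision-tree class and then appeal to the orthogonality of parities under the uniform distribution, exactly as in Observation~\ref{obs:sqdimparity}.

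First, I would note that for any subset $S$ of the input coordinates, the parity $\chi_S(x) = (-1)^{\sum_{i \in S} x_i}$ is exactly computed by a complete binary decision tree of depth $|S|$ that branches on the variables of $S$ in an arbitrary fixed order; such a tree has $2^{|S|+1}-1$ nodes. Setting $k = \lfloor \log n \rfloor - 1$, every parity on a $k$-subset of the input variables fits within the node budget $n$, and therefore lies in the concept class $C$ of decision trees on $n$ nodes.

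Next, taking the input dimension $N = n$ (any $N \ge k$ works), I would consider the family $\mathcal{F} = \{\chi_S : S \subseteq [N],\ |S| = k\}$, which has size $\binom{N}{k}$. Under the uniform distribution $D$ on $\{0,1\}^N$, distinct parities are orthogonal Fourier characters, so $\langle \chi_S, \chi_T\rangle_D = 0 \le 1/|\mathcal{F}|$ whenever $S \neq T$. Definition~\ref{def:sqdim} then immediately gives $\sqdim_D(C) \ge \binom{N}{k}$, and hence $\sqdim(C) \ge \binom{N}{k}$.

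The last step is a routine counting estimate: using $\binom{n}{k} \ge (n/k)^k$ with $k = \Theta(\log n)$ yields $\binom{n}{k} \ge (n/\log n)^{\log n - 1} = n^{(1-o(1))\log n}$, which exceeds $n^{c \log n}$ for any fixed $c < 1$ once $n$ is large. There is no real obstacle in the argument; the only balancing act is choosing $k$ so that the decision trees stay within the $n$-node budget while the number of $k$-subsets of variables is still $n^{\Omega(\log n)}$, and $k \approx \log n$ handles both sides simultaneously.
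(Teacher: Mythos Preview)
Your argument is correct and is exactly the approach the paper takes: encode each parity on roughly $\log n$ of the $n$ input variables by a complete binary decision tree of about $n$ nodes, use orthogonality of the Fourier characters under the uniform distribution, and count that there are $\binom{n}{\log n}=n^{\Theta(\log n)}$ such parities. The paper states this more tersely (via Figure~\ref{fig:SQtree} and the sentence ``Since there are $\binom{n}{\log n}$ choices of $\log n$ from $n$ variables\ldots''), but your version simply fills in the details.
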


This fact can be proven by showing how decision trees can encode many parity functions, all of which are pairwise orthogonal.
This is the standard technique for showing a high statistical query dimension.

 \begin{figure}[h]
 \begin{center}
\includegraphics[scale=0.25]{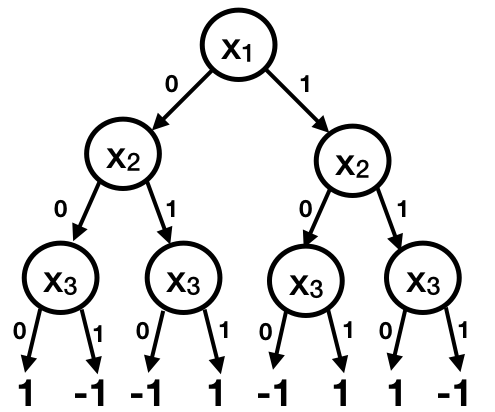} 
\end{center}
\caption{A decision tree on $7$ nodes encoding the parity function on $3$ variables.}\label{fig:SQtree}
\end{figure}

Figure~\ref{fig:SQtree} illustrates the straightforward way how a decision tree with $n-1$ nodes can encode a parity function on $\log n$ variables.
Since there are $\binom{n}{\log n}$ choices of $\log n$ from $n$ variables, this shows decision trees have a statistical query dimension of at least $n^{c \log n}$.\\

 \begin{observation}
 DNF of size n have $\sqdim \ge  n^{c \log n}$, and therefore, are not efficiently SQ learnable. 
 \end{observation}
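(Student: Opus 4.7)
The strategy mirrors the decision tree construction: I would embed many pairwise orthogonal parity functions into the class of size-$n$ DNFs, and then apply the statistical query dimension lower bound from Theorem~\ref{thm:sqdimlb}.

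First, I would show that the parity function on any fixed subset $S$ of $k$ variables admits a DNF with $2^{k-1}$ terms. The reason is that this parity evaluates to $1$ on exactly $2^{k-1}$ of the $2^k$ assignments to the variables in $S$, so we can write one width-$k$ conjunction of literals for each such satisfying assignment, leaving the remaining $n-k$ variables unconstrained. Choosing $k = \lfloor \log n \rfloor$ gives a DNF with at most $n/2 \le n$ terms, so every parity supported on a size-$\lfloor \log n \rfloor$ subset of the $n$ variables lies inside the class of size-$n$ DNFs.

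Next, I would count these embedded parities: there are $\binom{n}{\lfloor \log n \rfloor} = n^{\Omega(\log n)}$ distinct subsets of size $\lfloor \log n \rfloor$, each yielding a distinct parity function. Under the uniform distribution on $\{0,1\}^n$, any two such distinct parities satisfy $\left\langle \chi_S, \chi_{S'}\right\rangle_{U_n} = 0$ since they are orthogonal Fourier characters, the same fact already invoked in Observation~\ref{obs:sqdimparity}. Thus the $1/d$ threshold of Definition~\ref{def:sqdim} is trivially met, giving $\sqdim \ge n^{c \log n}$ for some constant $c > 0$, and the corollary above then yields that these DNFs are not efficiently SQ-learnable.

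The main thing to be careful about is the DNF-size accounting: if ``size'' counts number of terms, then $2^{k-1} \le n$ with $k = \lfloor \log n \rfloor$ works directly, while if it counts total literals, one takes $k$ slightly smaller (say $\lfloor \log(n/\log n) \rfloor$) and the counting bound $n^{\Omega(\log n)}$ still goes through unchanged. Everything else is routine bookkeeping once this convention is pinned down.
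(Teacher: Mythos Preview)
Your proposal is correct and matches the paper's argument essentially line for line: the paper likewise encodes a parity on $\log n$ variables as a DNF with $n/2$ terms (illustrated by Figure~\ref{fig:SQDNF}), and then implicitly reuses the $\binom{n}{\log n} = n^{\Omega(\log n)}$ count and the orthogonality of parities from the decision-tree case. Your write-up is in fact more detailed than the paper's one-sentence justification, and your remark about the terms-versus-literals convention is a welcome clarification the paper omits.
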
 

\begin{figure}[h]
\begin{center}
$(x_1 \wedge x_2 \wedge x_3) \vee (\bar{x}_1 \wedge \bar{x}_2 \wedge \bar{x}_3)
\vee (\bar{x}_1 \wedge {x}_2 \wedge \bar{x}_3) \vee (\bar{x}_1 \wedge \bar{x}_2 \wedge {x}_3)$
\end{center}
\caption{A $4$-term DNF encoding the parity function on $3$ variables}\label{fig:SQDNF}
\end{figure}

DNF formulae of size $n$ can similarly encore parity functions on $\log n$ variables using $n/2$ terms, as illustrated 
in Figure~\ref{fig:SQDNF}.\\

 \begin{observation}
 Deterministic finite automata on $n$ nodes have $\sqdim \ge 2^{cn}$, and therefore, are not efficiently SQ learnable.
 \end{observation}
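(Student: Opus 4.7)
The plan is to follow the template set by the preceding observations for decision trees and DNFs: construct an explicit embedding of parity functions into DFAs with $n$ states that realizes $2^{\Omega(n)}$ pairwise orthogonal parities, and then appeal to the corollary following Theorem~\ref{thm:sqdimlb}. The key point is that DFAs are strictly more powerful than trees at encoding parity, because they can reuse state across positions; this buys us an exponential (rather than quasi-polynomial) lower bound on $\sqdim$.

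First, I would exhibit, for every subset $S \subseteq [m]$, a DFA with about $2m$ states computing the parity $\chi_S(x) = \bigoplus_{i \in S} x_i$ on inputs $x \in \{0,1\}^m$. The construction takes the state set to be $\{0, 1, \ldots, m\} \times \{0, 1\}$, where the first coordinate tracks the current input position and the second tracks the running parity of the bits read so far from $S$. From state $(i,p)$, reading bit $b$, the DFA moves to $(i+1, p \oplus b)$ if $i+1 \in S$ and to $(i+1, p)$ otherwise; the accepting states are those of the form $(m,1)$. This gives a DFA with $2(m+1)$ states that exactly computes $\chi_S$.

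Next, I would set $m = \lfloor (n-2)/2 \rfloor$ so that all these DFAs fit within $n$ states. The resulting class contains the full set $\{\chi_S : S \subseteq [m]\}$ of $2^m$ parity functions on $m$ variables. Under the uniform distribution on $\{0,1\}^m$ (extended to the ambient input space by ignoring the remaining coordinates), these parities are pairwise orthogonal by the standard Fourier fact invoked in Observation~\ref{obs:sqdimparity}. Therefore $\sqdim \ge 2^m = 2^{cn}$ for some constant $c > 0$, and the corollary to Theorem~\ref{thm:sqdimlb} immediately rules out efficient SQ learnability.

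The only nontrivial step is the DFA construction, and even that is routine once one recognizes that position-tracking and parity-tracking combine multiplicatively in the state count. The main obstacle, if any, is purely cosmetic: getting the best possible constant $c$ would require reusing states more cleverly (for instance, collapsing positions outside $S$), but the crude $2(m+1)$ bound already delivers the $2^{cn}$ lower bound claimed in the observation.
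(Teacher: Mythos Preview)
Your proposal is correct and essentially identical to the paper's own argument: the paper simply exhibits (via a figure) a DFA on $2m+1$ states that tracks the input position together with the running parity of the relevant bits, exactly the ``position $\times$ parity'' product construction you describe, and then invokes Observation~\ref{obs:sqdimparity}. The only cosmetic difference is that the paper saves one state (the unreachable $(0,1)$), getting $2m+1$ rather than your $2(m+1)$, which of course does not affect the constant $c$.
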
 

\begin{figure}[h]
\begin{center}
\includegraphics[scale=0.25]{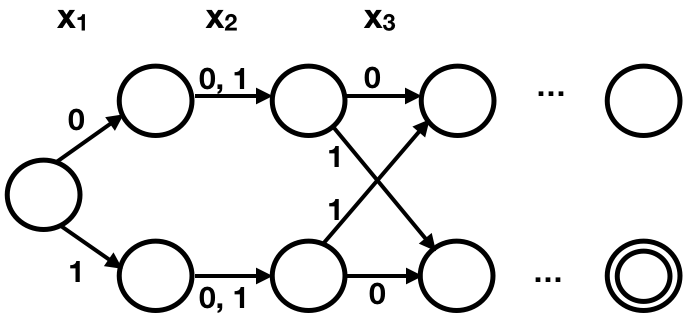} 
\end{center}
\caption{A DFA on $2n+1$ nodes encoding parities of size $n$.}
\end{figure}\label{fig:SQdfa}

Figure~\ref{fig:SQdfa} illustrates how deterministic finite automaton with $2n+1$ nodes can encode a parity function on $n$ variables.
Note that the crossings correspond to variables relevant to the parity function.

It turns out that even \emph{uniformly random} decision trees, DNF, and automata \citep*{AngluinEKR10}.

Note that only the first of these are known to be PAC learnable. We will discuss the implications of this in Section~\ref{sec:algs}.

\subsubsection{Comparison with VC dimension}

The results above imply certain relationships to other notions of dimension.  In this section, we briefly explore the relationship of $\sqdim$ with the 
Vapnik-Chervonenkis dimension, $\vcdim$, which controls the sample complexity of PAC learning \citep*{VapnikC15}.
Briefly stated, the $\vcdim$ of a concept class $\mathcal{C}$ is the maximum number of examples that $C$ can \emph{shatter}, i.e.\ achieve all possible 
labelings of the examples by functions in $C$.

First, we make the following observation, which also appears in~\citet*{BlumFJKMR94}\\
\begin{observation}
For a concept class $C$, let $\vcdim(C) = d$, then  $\sqdim(C) = \Omega(d)$.
\end{observation}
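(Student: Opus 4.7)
The plan is to exhibit, whenever $C$ shatters $d$ points, a family of $\Omega(d)$ functions in $C$ that are mutually orthogonal (and in particular have pairwise correlations bounded by $1/d'$ where $d'$ is the size of the family) with respect to some carefully chosen distribution $D$. The natural candidate for the orthogonal family is a Fourier-style construction using parities on the shattered set, exactly analogous to the construction behind Observation~\ref{obs:sqdimparity}.

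Concretely, let $d = \vcdim(C)$ and fix a shattered set $\{x_1, \ldots, x_d\}$. Let $k = \lfloor \log_2 d \rfloor$, so that $2^k \ge d/2$, and identify the first $2^k$ of the shattered points with binary strings $z \in \{0,1\}^k$ via a bijection $x_z \leftrightarrow z$. For each subset $S \subseteq [k]$, the parity function $\chi_S(z) = (-1)^{\sum_{i \in S} z_i}$ gives a labeling of $\{x_z : z \in \{0,1\}^k\}$ in $\{-1,+1\}$. Because the set is shattered, for every $S$ we can pick an $f_S \in C$ that realizes $\chi_S$ on these points (its behavior on the remaining $d-2^k$ points is irrelevant). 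Now let $D$ be the uniform distribution on $\{x_z : z \in \{0,1\}^k\}$.

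The key computation is then
\[
\langle f_S, f_T \rangle_D = \frac{1}{2^k}\sum_{z \in \{0,1\}^k} \chi_S(z)\chi_T(z) = \frac{1}{2^k}\sum_{z} \chi_{S \triangle T}(z),
\]
which is $0$ whenever $S \neq T$ by the standard orthogonality of Fourier characters on $\{0,1\}^k$ (as invoked after Observation~\ref{obs:sqdimparity}). Thus the $2^k$ functions $\{f_S : S \subseteq [k]\}$ form a family of pairwise orthogonal members of $C$, so in particular $|\langle f_S, f_T \rangle_D| \le 1/2^k$ for $S \neq T$, witnessing $\sqdim_D(C) \ge 2^k \ge d/2$. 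Taking the max over $D$ then gives $\sqdim(C) = \Omega(d)$.

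The only subtlety is that we lose a factor of $2$ in passing from $d$ shattered points to the nearest power of two $2^k$, so one does not get $\sqdim(C) \ge d$ exactly — but this is precisely why the bound is stated as $\Omega(d)$ rather than a tight equality. A slicker alternative would be to replace the parity construction with rows of a Hadamard matrix on the shattered points, which would give the same $\Omega(d)$ guarantee (and exactly $d$ whenever a Hadamard matrix of order $d$ exists); the Fourier-parity argument above has the advantage of needing no number-theoretic hypothesis on $d$ and reusing machinery that the survey has already introduced.
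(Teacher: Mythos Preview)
Your proof is correct and follows essentially the same approach as the paper: identify (a power-of-two subset of) the shattered set with a boolean cube, realize each parity labeling by a concept in $C$ using shattering, and invoke orthogonality of Fourier characters under the uniform distribution on that set. The paper's version is terser and tacitly assumes $d$ is a power of two, whereas you handle the rounding explicitly, but the idea is identical.
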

\begin{proof}
Let $d$ be the VC dimension of $C$.  Then there exists a set $S$ of $d$ points $C$ can shatter.  Assume without loss of generality that the domain
of $S$ is $\{0,1\}^{\log d}$. Because $C$ shatters $S$, it contains all $d$ parity functions over $\{0,1\}^{\log d}$, which by Observation~\ref{obs:sqdimparity} have $\sqdim$ of $2^{\log d} = d$.
\end{proof}

On the other hand, SQ dimension can be much larger than VC dimension.\\

\begin{observation}
There exist classes $C$, for which $\vcdim(C) = d$, but for which $\sqdim(C)$ can be as large as $2^d$.
\end{observation}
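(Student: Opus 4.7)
My plan is to exhibit the witness explicitly: let $C$ be the class of all $2^d$ parity functions on $\{0,1\}^d$, that is, $C = \{\chi_S : S \subseteq [d]\}$ where $\chi_S(x) = (-1)^{\sum_{i \in S} x_i}$. The claim is that $\vcdim(C) = d$ while $\sqdim(C) = 2^d$.

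For the SQ dimension, I would simply invoke Observation~\ref{obs:sqdimparity}: all $2^d$ parity functions on $\{0,1\}^d$ are pairwise orthogonal under the uniform distribution, so taking $D$ uniform gives $\sqdim_D(C) = 2^d$, and hence $\sqdim(C) \ge 2^d$.

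For the VC dimension, I would establish the matching lower and upper bounds. For the lower bound, I would show that $C$ shatters the standard basis $\{e_1, \ldots, e_d\} \subseteq \{0,1\}^d$: for any target labeling corresponding to a subset $T \subseteq [d]$, the parity $\chi_T$ satisfies $\chi_T(e_i) = (-1)^{\mathbf{1}[i \in T]}$, producing exactly that labeling. For the upper bound, I would use the counting argument: shattering $d+1$ points requires $2^{d+1}$ distinct dichotomies, but $|C| = 2^d$, so no set of size $d+1$ can be shattered.

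The step that requires the most care is the VC upper bound, but it is essentially immediate from $|C| = 2^d$, so there is no real obstacle here; the whole observation reduces to the fact that parities pack an exponential number of pairwise-orthogonal functions into a concept class of VC dimension only $d$. Combining these gives $\vcdim(C) = d$ and $\sqdim(C) = 2^d$, as desired.
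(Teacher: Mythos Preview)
Your proposal is correct and takes essentially the same approach as the paper: both use the class of parity functions on $\{0,1\}^d$ as the witness and invoke Observation~\ref{obs:sqdimparity} for the SQ dimension. The paper's proof simply asserts $\vcdim = d$ without justification, whereas you supply the explicit shattering of the standard basis and the counting upper bound, so your version is strictly more detailed but otherwise identical.
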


\begin{proof}
Parity functions on $\{0,1\}^d$ have $\vcdim=d$ but again by Observation~\ref{obs:sqdimparity} have $\sqdim = 2^d.$
\end{proof}

Finally, we might ask if there are classes with VC dimension $d$ but even larger SQ dimension.  The answer turns out to be no.\\

\begin{theorem}[\citet*{Sherstov18}]
Let $C$ be a concept class with $\vcdim(C) = d$.  Then, $\sqdim(C) \le 2^{O(d)}.$
\end{theorem}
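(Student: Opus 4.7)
The plan is to translate the near-orthogonality condition into an $L_1(D)$-packing condition and then invoke the classical packing-number bound for VC classes.

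First I would fix a distribution $D$ that witnesses $\sqdim(C) = k$, together with functions $f_1, \dots, f_k \in C$ satisfying $|\langle f_i, f_j\rangle_D| \le 1/k$ for all $i \neq j$. Since $f_i, f_j$ take values in $\{-1,1\}$, the identity $\langle f_i, f_j\rangle_D = 1 - 2\Pr_D[f_i \neq f_j]$ converts near-orthogonality into the disagreement bound $\Pr_D[f_i \neq f_j] \in [1/2 - 1/(2k),\; 1/2 + 1/(2k)]$. For $k \ge 4$, this means the indicator functions associated with the $f_i$ are pairwise separated by at least $3/8$ in the $L_1(D)$ metric; in other words, $\{f_1,\dots,f_k\}$ is a $(3/8)$-packing of $C$ in $L_1(D)$.

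Next I would invoke Haussler's packing lemma, which says that for any class $C$ of Boolean functions with $\vcdim(C) = d$, for any distribution $D$ and any $\epsilon > 0$, the $\epsilon$-packing number of $C$ in $L_1(D)$ is at most $e(d+1)(2e/\epsilon)^d$. Applying this bound with the constant $\epsilon = 3/8$ gives a packing number of $O(d) \cdot (16e/3)^d = 2^{O(d)}$. Hence $k \le 2^{O(d)}$. Since the chosen $D$ was an arbitrary maximizer, this yields $\sqdim(C) \le 2^{O(d)}$.

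The main obstacle is choosing the right tool at the packing step. A naive route through Sauer--Shelah would draw an i.i.d.\ sample of size $m = O(\log k)$, argue via Chernoff plus a union bound that the $f_i$ induce distinct labelings with high probability, and then cap the number of labelings by $(em/d)^d$. Chasing that inequality through only gives $k \le 2^{O(d \log d)}$, which is strictly weaker than what the theorem asserts. The sharper $2^{O(d)}$ bound really requires Haussler's (or an equivalent chaining-style) refinement of the shatter-function bound, which removes the spurious $\log d$ factor by exploiting the $L_1$ geometry rather than purely combinatorial shattering.
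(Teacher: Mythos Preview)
The paper does not actually prove this theorem; it merely states the result and attributes it to \citet*{Sherstov18}, so there is no ``paper's own proof'' to compare against. Your argument is correct and is essentially the standard route to this bound: converting near-orthogonality into a constant $L_1(D)$-separation and then applying Haussler's packing lemma is exactly what is needed, and your observation that the cruder Sauer--Shelah route loses a $\log d$ factor is accurate.
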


\subsection{SQ upper bounds}

Following from Definition~\ref{def:sqdim} (statistical query dimension), we can also get an upper bound on the number of statistical
queries needed to achieve weak learnability.\\

\begin{observation}
Let $C$ be a concept class and let $\sqdim_D(C) = \poly(n)$, then $C$ is weakly
learnable under~$D$.
\end{observation}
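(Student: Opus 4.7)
My plan is to use the statistical query dimension bound to produce a small set $F \subseteq C$ of candidate hypotheses with the property that every target $c \in C$ is nontrivially correlated (up to sign) with at least one member of $F$, then estimate those correlations with statistical queries and output the best candidate as a weak hypothesis.

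The set $F$ is built greedily. Let $d = \sqdim_D(C)$ and fix the threshold $\gamma = 1/(d+1)$. Starting from $F = \emptyset$, while there exists some $c \in C$ with $|\langle c, f\rangle_D| < \gamma$ for every $f$ currently in $F$, add such a $c$ to $F$. By construction, every two distinct elements of $F$ have correlation strictly less than $1/(d+1)$, so if $|F|$ ever reached $d+1$ we would obtain $d+1$ functions in $C$ witnessing $\sqdim_D(C) \ge d+1$, contradicting $\sqdim_D(C) = d$. Hence $|F| \le d = \poly(n)$. When the greedy procedure terminates, no $c \in C$ can be added, which is precisely the statement that for every $c \in C$ there exists some $f \in F$ with $|\langle c, f\rangle_D| \ge \gamma$.

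For the algorithm, for each $f \in F$ we ask the statistical query $(q_f, \tau)$ with $q_f(x,y) = f(x) \cdot y$ and $\tau = \gamma/3$; since $\E_{x\sim D}[q_f(x,c(x))] = \langle f, c\rangle_D$, the oracle returns an estimate $\hat{p}_f$ of $\langle f, c\rangle_D$ within $\tau$. This uses $|F| \le d$ queries of inverse-polynomial tolerance, each involving evaluation of a function in $C$, so the procedure meets the efficiency conditions of Definition~\ref{def:sqlearning}. Let $f^* \in \arg\max_{f \in F} |\hat{p}_f|$ and output $h = \mathrm{sign}(\hat{p}_{f^*}) \cdot f^*$. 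The guaranteed $f$ from the covering property has $|\hat{p}_f| \ge \gamma - \tau = 2\gamma/3$, so $|\hat{p}_{f^*}| \ge 2\gamma/3$ and $\langle h, c\rangle_D \ge |\hat{p}_{f^*}| - \tau \ge \gamma/3 = 1/(3(d+1))$, giving $\err_D(h) \le 1/2 - 1/(6(d+1))$, which is $1/2 - 1/\poly(n)$ as required for weak learning.

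The main obstacle I expect is matching the greedy threshold to the somewhat awkward form of Definition~\ref{def:sqdim}, which pairs a set of size $k$ with a correlation bound of $1/k$ rather than stating a uniform threshold. A naive threshold like $1/d$ would not forbid a greedy output of size $d+1$, since $(d{+}1)$-element sets with pairwise correlation $\le 1/d$ are not excluded by $\sqdim_D(C)=d$. Choosing $\gamma = 1/(d+1)$, so that any hypothetical $(d{+}1)$-element greedy output directly witnesses $\sqdim_D(C) \ge d+1$, is the bookkeeping that ties the covering-size bound to a covering quality still large enough to be detected with inverse-polynomial tolerance.
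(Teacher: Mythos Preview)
Your proof is correct and follows essentially the same argument as the paper: take a maximal nearly-orthogonal set in $C$, query each member's correlation with the target, and use maximality to conclude that some member must be nontrivially correlated. The paper simply takes a set $S=\{f_1,\ldots,f_d\}$ realizing $\sqdim_D(C)$ and asserts that if every $\langle f_i,c^*\rangle_D \le 1/d$ then $c^*$ could be adjoined to $S$; you build $F$ greedily with threshold $1/(d+1)$ and make the same maximality argument. In fact, your careful choice of $\gamma=1/(d+1)$ patches the very off-by-one issue you flag in your last paragraph, which the paper's short proof glosses over (adding $c^*$ to $S$ yields $d+1$ functions with pairwise correlations $\le 1/d$, not $\le 1/(d+1)$, so a literal appeal to Definition~\ref{def:sqdim} needs exactly the adjustment you make).
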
 
\begin{proof}
Let $S = \{f_1, \ldots f_d\} \subseteq C$ realize the SQ bound.  For each $f_i \in S$,
query its correlation with $c^*$.  At least one must have a correlation greater than $1/d$; otherwise we could add
$c^*$ to $S$, contradicting $S$'s maximality.
\end{proof}

Because of this observation, $\sqdim$ is sometimes referred to as the \emph{weak} statistical query dimension.

One may then  ask about strong learnability, as in Definition~\ref{def:sqlearning} (efficient SQ learning).
\citet*{Schapire90} showed that a class is strongly learnable if and only if it is weakly learnable in the PAC setting.  
It is then natural to ask whether the same equivalence between weak and strong learnability 
holds in the SQ setting, and indeed \citet*{AslamD98b} showed ``statistical query boosting" is possible.\\

\begin{theorem}[\citet*{AslamD98b}]
Let $d = \sqdim(C)$, then $C$ is SQ-learnable to error $\epsilon > 0$ using $O(d^5 \log^2 ({1}/{\epsilon}))$  queries with tolerances bounded by
$\tau = \Omega({\epsilon}/{(3d)})$.
\end{theorem}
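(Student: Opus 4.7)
The plan is to combine the weak SQ learner implicit in the preceding Observation with a boosting procedure whose reweighted distributions can be simulated through the SQ oracle on $D$. First, the preceding Observation already gives us a weak learner for $C$ under $D$: querying the $d$ correlations $\langle c, f_i\rangle_D$ with the witnesses of $\sqdim_D(C)$ at tolerance $\Theta(1/d)$ must turn up some $f_i$ with $|\langle c, f_i\rangle_D|$ bounded below by $\Theta(1/d)$, yielding a hypothesis with edge $\gamma = \Theta(1/d)$ over random guessing.

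Next, I would wrap this weak learner in a standard boosting scheme (for example, AdaBoost or Freund's boost-by-majority). With edge $\gamma = \Theta(1/d)$, the generic boosting bound gives $T = O(\gamma^{-2}\log(1/\epsilon)) = O(d^2 \log(1/\epsilon))$ rounds, after which the majority-vote combination of the weak hypotheses has error at most $\epsilon$ on $D$.

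The main technical work is to show that every round can be implemented using only SQs against $D$. In round $t$ the booster calls the weak learner against a distribution $D_t$ whose density with respect to $D$ has the form $w_t(x,c(x))/Z_t$, where $w_t$ is determined by the previously selected hypotheses and the label $c(x)$. Any SQ under $D_t$ then decomposes as
$$\E_{D_t}[q(x,c(x))] = \frac{\E_D[w_t(x,c(x))\, q(x,c(x))]}{\E_D[w_t(x,c(x))]},$$
and both the numerator and denominator are legitimate SQs against $D$ after rescaling the integrand to $[-1,1]$. Achieving tolerance $\tau'$ under $D_t$ costs tolerance of order $\tau' Z_t$ on $D$, so by using a smooth variant of boosting to keep $Z_t = \Omega(\epsilon)$ we can meet the $\tau = \Omega(\epsilon/d)$ tolerance demanded by the theorem.

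Finally, accounting over $T = O(d^2 \log(1/\epsilon))$ boosting rounds, with $O(d)$ correlational queries per invocation of the weak learner plus $O(1)$ queries per round for the normalizing constants, and an extra $\log$ factor from the union bound over all $\poly(d,\log(1/\epsilon))$ calls, yields the stated $O(d^5 \log^2(1/\epsilon))$ bound. I expect the main obstacle to be the tension between reweighting and tolerance: we need a boosting scheme whose weights are provably bounded in terms of $\epsilon$ so that the effective SQ tolerance on $D$ never collapses below $\Omega(\epsilon/d)$, and the precise polynomial exponent (the $d^5$, rather than the more naive $d^3$) comes from carefully tracking how tolerance compounds with normalization across rounds and how errors in the reweighted queries propagate through the final majority vote.
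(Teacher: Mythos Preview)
Your high-level plan coincides with the paper's own sketch, which is a single sentence: ``the learner simulates boosting by feeding in his series of weighted weak learners to the SQ oracle via a statistical query and then asking the oracle to simulate the resulting distribution.'' Weak learner from $\sqdim$, boost, and simulate each reweighted query as a ratio of two SQs against $D$ --- that is exactly the route.

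One point deserves more care than you give it. You invoke ``the witnesses of $\sqdim_D(C)$'' as the $d$ functions the weak learner queries, but in round $t$ the weak learner must succeed under $D_t$, not $D$, and the maximal nearly-orthogonal set realizing $\sqdim_{D_t}(C)$ is a \emph{different} set that depends on $D_t$. Since $D_t(x)\propto D(x)\,w_t(x,c^*(x))$ depends on the unknown target through the boosting weights, the learner cannot compute those witnesses offline; even the ``target-independent'' correlations $\langle f_i,f_j\rangle_{D_t}$ become genuine statistical queries on $D$ once you unwind the reweighting. So the weak learner is not literally ``ask $d$ fixed correlations''; identifying which $d$ hypotheses to probe is itself part of the SQ cost at each round. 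This, together with the tolerance degradation you already flag, is where the slack between your back-of-the-envelope $d^3\log(1/\epsilon)$ and the stated $d^5\log^2(1/\epsilon)$ is absorbed.

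A minor historical note: \citet*{AslamD98b} predates the smooth-boosting line, so the original argument controls the density ratio $Z_t$ via the explicit structure of Freund's boost-by-majority scheme rather than by invoking a smoothness guarantee. Your use of smooth boosting is a perfectly good modern substitute and arguably cleaner, but it is a different mechanism from the one actually used.
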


The outline of the proof of the above theorem is as follows: the learner simulates boosting by feeding in his series of
weighted weak learners to the SQ oracle via a statistical query and then asking the oracle to simulate the resulting distribution.

But this procedure, like regular boosting, works only for \emph{distribution independent} learning, i.e.\ when weak learnability is achievable for any distribution.
In the distribution-dependent case, (weak) SQ dimension does not necessarily
characterize strong learnability.

For this reason, there exist definitions for a corresponding notion of strong SQ dimension
\citep*{Feldman12,Simon07,Szorenyi09}.  We provide a definition here;
roughly speaking, $\ssqdim_D(C,1-\epsilon)$, controls the complexity of learning $C$.\\
\begin{definition}[strong statistical query dimension]
For a concept class $C$ and distribution $D$, let the \underline{strong statistical  query dimension} $\ssqdim_D(C,\gamma)$ be the largest $d$ such that some $f_1, \ldots, f_d \in C$
fulfill
\begin{enumerate}
\item $|\left<f_i,f_j\right>_D| \le \gamma$ for $1 \le i < j \le d$, and
\item $|\left<f_i,f_j\right>_D - \left<f_k,f_\ell\right>_D| \le 1/d$ for $1 \le i < j \le d$, $1 \le k < \ell \le d.$
\end{enumerate}
\end{definition}


For $\epsilon = 1/10$, the gap between strong and weak SQ dimension can be as large as possible.
To see this, 
consider the following class of functions:
$$\mathcal{F} = \{v_1 \vee \chi_c\ |\ c \in \{0,1\}^n\}.$$ 
Then it is not hard to see that  $\sqdim_U(\mathcal{F}) = 1$ but $$\ssqdim_U(\mathcal{F},9/10) = 2^n.$$

\citet*{Feldman12} also showed that a variant of $\ssqdim$ captures the complexity of 
agnostic learning of a hypothesis class, which implies that even agnostically learning conjunctions
is not possible with statistical queries


\subsection{The complexity of learning}\label{sec:algs}

If we consider SQ, PAC, etc.\ as classes that contain classes of functions that are learnable in those respective models, 
we have seen that
$$ \mathrm{efficient\ SQ}\ \subseteq\ \mathrm{efficient\ PAC\ under\ classification\ noise}\ \subseteq\ \mathrm{efficient\ PAC}.$$
In Section~\ref{sec:notsq}, we have also seen that parity functions are efficiently PAC learnable, but not efficiently SQ learnable.
So, a natural question is whether
parity functions are learnable in PAC under classification noise?  This question is the (notorious) problem of \emph{learning parities under noise} (LPN).

There was indeed some progress on the LPN problem.
\citet*{BlumKW03} gave a $2^{O(n/\log n)}$ algorithm for efficiently learning parities in PAC under (constant) classification noise.
 This implies that the (admittedly artificial) class of parities on the first $k = \log n \log \log n$ bits
are efficiently learnable in PAC under classification noise, but not efficiently SQ learnable.

It is, however, widely believed that there is no efficient algorithm for the LPN problem in general. Variants have been proposed for public-key cryptography \citep*{Peikert14}.
There has been some progress on this and related problems, but we are far from efficient algorithms. \citep*{BlumKW03,GrigorescuRV11,Valiant15}).


A series of results has show how to implement analogues\footnote{While, for example, stochastic
gradient descent is technically not a statistical algorithm, a noisy variant of it can be implemented
via a statistical query oracle.} of many of current algorithmic approaches via a statistical query oracle.  These include
\begin{itemize}
\item Gradient descent \citep*{Robbins51}
\item Expectation-maximization (EM) \citep*{DempsterLR77}
\item Support vector machines (SVM) \citep*{CortesV95,MitraMP04}
\item Linear and convex optimization~\citep*{DunaganV08}
\item Markov-chain Monte Carlo (MCMC)~\citep*{TannerW87,GelfandSmith90}
\item Simulated annealing~\citep*{Cerny85,KirkpatrickGV83}
\item Pretty much everything else, including PCA, ICA, Na\"ive Bayes, neural net algorithms, $k$-means~\citep*{BlumDMN05}.
\end{itemize}

On the other hand, we have only few algorithms that have no analogous implementation via a statistical
query oracle. 
These include variants of
Gaussian elimination, hashing, and bucketing.
Most of our other techniques seem to be implementable with statistical queries.
This helps explain why we don't have algorithms for many natural 
classes, including decision trees and DNF, which have high SQ dimension and are therefore difficult to learn
using current techniques even in the absence of noise.

To tackle these problems, it appears we need to invent fundamentally different methods.


\section{Applications}\label{sec:applications}

In this section, we explore three modern applications of statistical queries.  These include optimization
problems over distributions, evolvability and differential privacy / adaptive data analysis.  We conclude
with a small collection of other areas to show the diversity of the applications of statistical queries.

\subsection{Optimization and search over distributions}

As a motivating example of an optimization problem over a distribution, consider the problem of finding the direction that maximizes the $r$th moment over a distribution $D$, 
$$\mathrm{argmax}_{u : |u|=1} \E_{x \sim D}[(u \cdot x)^r].$$
For $r=1$, this is maximized at the mean, which is easy to compute.  For $r=2$, we need the direction of highest variance, and 
PCA gives the solution.  For $r \ge 3$, these are
strong complexity and information-theoretic reasons to think this moment maximization problem is intractable.

Statistical algorithms apply to such optimization problems over distributions.  In this setting, there is a distribution $D$ unknown to the learner,
and the learner would normally try to solve such optimization problems by working over a sample from $D$.

Carrying over the statistical query ideas from learning, \citet*{FeldmanGRVX17} extended this setting to search and optimization 
problems over distributions. 
Any problem with instances coming from a distribution $D$ (over $X$) can be analyzed via a \emph{statistical oracle}, which
is meant to be a generalization of a statistical query oracle to settings without labels.  

They defined three oracles: STAT, which 
corresponds to the SQ oracle; $1$-STAT, which corresponds to an HSQ oracle working over $1$ sample at a time; and VSTAT, which
corresponds to the range of results expected from an independent sampling procedure from a Bernoulli distribution with a given mean.\\

\begin{definition}[The STAT, $1$-STAT, and VSTAT oracles]
Let $q: X \rightarrow \{0,1\}$, $\tau > 0$ a tolerance, and $t > 0$ a sample size.
\begin{itemize}
\item \underline{STAT}$(q,\tau)$:\ \ \ \ \ \hskip.04in returns a value in: $[\mu- \tau, \mu  + \tau],$
\item \underline{$1$-STAT}$(q)$:\ \ \ \ \ \ \hskip.01in draws $1$ sample, $x \sim D$, and returns $q(x)$,
\item \underline{VSTAT}$(q,m)$:\ \ \ returns a value $[\mu - \tau', \mu  + \tau'],$
\end{itemize}
where $\mu = \E_{x \sim D}[q(x)]$ and $\tau' = \max \left\{1/m, \sqrt{\mu(1-\mu)/m} \right\}$. 
\end{definition}

\subsubsection{Statistical dimension}

Like the notion of statistical query dimension,~\citet*{FeldmanGRVX17} defined an analogous distributional 
notion called statistical dimension.  The notion that they use involves a stronger notion of \emph{average} correlation, but 
we first need to define the pairwise correlation of two distributions.\\

\begin{definition}[pairwise correlation of two distributions] 
Define the \underline{pairwise correlation} of $D_1$, $D_2$ with respect to $D$ is
$$\chi_D(D_1,D_2) = \left| \left< \frac{D_1}{D}-1, \frac{D_2}{D}-1 \right>_D \right|.$$
Note that $\chi_D(D_1,D_1) = \chi^2(D_1,D)$, the chi-squared distance between $D_1$ and $D$ \citep*{Pearson00}.
\end{definition}

As an example of the definition above, let $X = \{0,1\}^n$ and $D_{c_1}, D_{c_2}$ be uniform over the examples labeled $-1$ by $\chi_{c_1}, \chi_{c_2}$, 
respectively. It turns out $\chi_U(D_{c_1},D_{c_2}) = 0$.

To see this, let us compute  $ \chi_U(D_{010},D_{011})= \left< \frac{D_{010}}{U}-1, \frac{D_{011}}{U}-1 \right>_U $
 for $n=3$ using the table below.

\begin{center} 
\begin{tabular}{|c|c|c|c|c|c|c|c|}
\hline
$X$ &  $U$ & $D_{010}$ & $D_{011}$ &  $\frac{D_{010}}{U}$ &  $\frac{D_{011}}{U}$ & $\frac{D_{010}}{U}-1$ & $\frac{D_{011}}{U}-1$\\
\hline
{000} & 1/8 & 0 & 0 & 0 & 0 & -1 & -1\\
001 & 1/8 & 0 & 1/4 & 0 & 2 & -1 & 1\\
010 & 1/8 & 1/4 & 1/4 & 2 & 2 & 1 & 1\\
011 & 1/8 & 1/4 & 0 & 2 & 0 & 1 & -1\\
100 & 1/8 & 0 & 0 & 0 & 0 & -1 & -1\\
101 & 1/8 & 0 & 1/4 & 0 & 2 & -1 & 1\\
110 & 1/8 & 1/4 & 1/4 & 2 & 2 & 1 & 1\\
111 & 1/8 & 1/4 & 0 & 2 & 0 & 1 & -1\\
\hline

\end{tabular}
\end{center}
\begin{align*} 
\left< \frac{D_{010}}{U}-1, \frac{D_{011}}{U}-1 \right>_U = & 
\ \ \frac{(-1)(-1)}{8} \ + \ \frac{(-1)(1)}{8} \ + \ \frac{(1)(1)}{8} \ + \ \frac{(1)(-1)}{8} \ + \\
& \ \ \frac{(-1)(-1)}{8} \ + \ \frac{(-1)(1)}{8} \ + \ \frac{(1)(1)}{8} \ + \ \frac{(1)(-1)}{8} \\ = & \ \ 0
\end{align*}


Now we define another and stronger notion called average correlation.\\


\begin{definition}[average correlation of a set of distributions]
Define the \underline{average correlation} of a set of distributions $\mathcal{D}'$ relative to $D$ as
$$
\rho(\mathcal{D'},D) = \frac{1}{|\mathcal{D'}|^2}\sum_{D_1,D_2 \in \mathcal{D}'} \chi_D(D_1,D_2).
$$
\end{definition}

Now, we can finally define statistical dimension with average correlation (SDA).\\

\begin{definition}[statistical dimension with average correlation\footnote{We chose to use this definition of statistical dimension  because it was the framework
in which the first novel optimization lower bound (on the planted clique problem, as presented in Section~\ref{sec:plantedclique}) was proven, 
and because the survey's aim to illustrate the application as opposed to giving the tightest possible bounds here.
However, statistical dimension with average correlation does not always give the strongest lower bounds, and it was
later strengthened to use discrimination norm \citep*{FeldmanPV15} and then
extended to ``Randomized Statistical Dimension" \citep*{Feldman17}.}]
For $\bar{\gamma} > 0$, a domain $X$, a set of distributions $\mathcal{D}$ over $X$ and a reference distribution $D$ over $X$, the \underline{statistical dimension} of $\mathcal{D}$ relative to $D$ \underline{with average correlation}
 $\bar{\gamma}$ is defined to be the largest value $d$ such that for any subset $\mathcal{D}' \subseteq \mathcal{D}$ for which $|\mathcal{D'}| \ge \mathcal{D}/d$, we have $\rho(\mathcal{D}',D) \le \bar{\gamma}$. This is denoted SDA$_D(\mathcal{D}, \bar{\gamma})$. 
For a search problem $\mathcal{Z}$ over distributions\footnote{
The definition of search problems, as given by~\citet*{FeldmanGRVX17}, is as follows: for a domain $X$,
land $\mathcal{D}$ a set of distributions over $X$, let $F$ be a set called {\em solutions} and $\mathcal{Z}: \mathcal{D} \rightarrow 2^{\mathcal{F}}$   be a map from a distribution $D \in \mathcal{D}$ to a subset of solutions $\mathcal{Z}(D)
\subseteq \mathcal{F}$ that are defined to be valid solutions for $D$.
 The search problem $\mathcal{Z}$ over $\mathcal{D}$ and $F$ using $t$ samples is to find a valid solution $f \in 
 \mathcal{Z}(D)$ given access to an unknown $D \in \mathcal{D}$.
}, we use:
SDA$(\mathcal{Z},\bar{\gamma})$. 
\end{definition}

Intuitively, the largest such $d$ for which $1/d$ fraction of the set of distributions has
low pairwise correlation is the statistical dimension.\\

\begin{theorem}[\citet*{FeldmanGRVX17}]\label{thm:sdabd}
Let $X$ be a domain and $\mathcal{Z}$ be a search problem over a 
class of distributions $D$ over $X$. For $\bar{\gamma} > 0$, let $d = \mathrm{SDA}(\mathcal{Z}, \bar{\gamma})$.
To solve $\mathcal{Z}$ with probability $\ge 2/3$, any SQ algorithm requires at least:
\begin{itemize}
\item $d$ calls to VSTAT$(.\ , c_1/\bar{\gamma})$
\item $\min(d/4, c_2/\bar{\gamma} )$ calls to $1$-STAT$(.)$
\item $d$ calls to STAT$(.\ , c_3\sqrt{\bar{\gamma}})$.
\end{itemize}
\end{theorem}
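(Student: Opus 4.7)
The plan is to construct an adversarial oracle strategy and argue that too few queries cannot distinguish among a substantial subfamily of $\mathcal{D}$. Fix the reference distribution $D$ and, for each query $q$, answer as if the target were $D$: for STAT and VSTAT return $\E_D[q]$ (which trivially lies in the legal interval around itself); for $1$-STAT draw $x \sim D$ and return $q(x)$. Call $D_i \in \mathcal{D}$ \emph{killed} by a query if the returned answer falls outside the legal range when the true target is $D_i$. I would argue that if, after the algorithm's queries, the set $\mathcal{D}'$ of survivors still satisfies $|\mathcal{D}'| \ge |\mathcal{D}|/d$, then by the definition of SDA we have $\rho(\mathcal{D}', D) \le \bar{\gamma}$; since distinct distributions in the hard family must have essentially disjoint valid-solution sets, the algorithm's output (a fixed function of the transcript and its coins) cannot be correct with probability $\ge 2/3$.

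First, for the STAT case, set $f_i = D_i/D - 1$, so a killed $D_i$ satisfies $|\langle q, f_i\rangle_D| > \tau$. Partition the killed set $A$ into $A^+$ and $A^-$ by the sign of $\langle q, f_i\rangle_D$, and assume without loss of generality $|A^+| \ge |A|/2$. Applying Cauchy-Schwarz to $\langle q, \sum_{i \in A^+} f_i\rangle_D$ and using $\|q\|_D^2 \le 1$ for a $\{0,1\}$-valued query yields
$$(|A^+|\tau)^2 \le \|q\|_D^2 \left\|\sum_{i \in A^+} f_i\right\|_D^2 \le |A^+|^2\, \rho(\{D_i\}_{i \in A^+}, D),$$
so $\rho(\{D_i\}_{i \in A^+}, D) \ge \tau^2$. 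Choosing $\tau = c_3\sqrt{\bar{\gamma}}$ with $c_3$ slightly larger than $1$ contradicts the SDA definition unless $|A^+| < |\mathcal{D}|/d$; hence each query eliminates at most $O(|\mathcal{D}|/d)$ distributions and $\Omega(d)$ queries are needed to exhaust $\mathcal{D}$, absorbing the loss in the constant $c_3$.

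Second, for VSTAT$(q, m)$ the tolerance is $\tau' = \max\{1/m, \sqrt{\mu(1-\mu)/m}\}$ with $\mu = \E_D[q]$, so I would replay the Cauchy-Schwarz step on the normalized function $(q-\mu)/\sqrt{\mu(1-\mu)}$, which has $L^2(D)$ norm one; a killed $D_i$ satisfies $(\mu_i-\mu)^2 \ge \tau'^2$ and the resulting bound reads $\rho(A^+, D) \gtrsim 1/m$, so $m = c_1/\bar{\gamma}$ again forces $|A^+| < |\mathcal{D}|/d$. For $1$-STAT the oracle is non-adaptive, so the argument switches to distinguishability: the transcript distribution under $D$ is close in total variation to that under $D_i$, and by additivity of KL over independent samples together with $\mathrm{KL}(\mathrm{Ber}(\mu_i)\,\|\,\mathrm{Ber}(\mu)) \lesssim (\mu_i-\mu)^2/(\mu(1-\mu))$, $t$ samples distinguish $D$ from $D_i$ only when $t\cdot \chi_D(D_i,D_i) = \Omega(1)$. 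Averaging this over a subfamily of size $|\mathcal{D}|/d$ via SDA yields the stated $\min(d/4, c_2/\bar{\gamma})$ bound, and the $d/4$ branch appears because even without accuracy constraints the adversary must burn through at least a constant fraction of the survivors.

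The main obstacle I anticipate is the VSTAT regime analysis: the two branches of the $\max$ defining $\tau'$ require separate accounting, and in the small-$\mu$ regime where $1/m$ dominates one must be careful that the Cauchy-Schwarz bound remains tight when $\mu(1-\mu)$ is tiny. A secondary annoyance is the sign mismatch between $\chi_D$ (an absolute value) and the signed inner product $\langle f_i, f_j\rangle_D$ appearing in $\|\sum f_i\|_D^2$, which forces the $A^+$/$A^-$ split and loses a constant factor; the constants $c_1, c_2, c_3$ must then be chosen to absorb these losses while keeping the per-query killing fraction strictly below $1/d$.
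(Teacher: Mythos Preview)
Your proposal is correct and tracks the paper's approach. The survey does not give a full proof of this theorem; it only offers one sentence of intuition, pointing back to Sz\"or\'enyi's Cauchy--Schwarz argument for Theorem~\ref{thm:sqdimlb} and observing that the bound in Equation~\ref{eq:szorenyibd} goes through as long as the \emph{average} pairwise correlation over the set $A$ is small. Your STAT analysis is exactly this: set $f_i = D_i/D - 1$, answer with the reference value $\E_D[q]$, collect the killed indices, apply Cauchy--Schwarz to $\langle q,\sum_{i\in A^+} f_i\rangle_D$, and read off $\rho(A^+,D)\ge \tau^2$, which contradicts the SDA guarantee unless $|A^+|<|\mathcal{D}|/d$. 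That is precisely the adaptation the paper alludes to.

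Your VSTAT and $1$-STAT sketches go beyond anything the survey supplies. The VSTAT normalization by $(q-\mu)/\sqrt{\mu(1-\mu)}$ is the right move and matches the argument in \citet*{FeldmanGRVX17}; you have also correctly identified the genuine technical nuisance, namely the branch of the $\max$ where $1/m$ dominates and $\mu(1-\mu)$ is tiny. For $1$-STAT your information-theoretic route (KL additivity plus Pinsker-type control by $\chi^2$) is a legitimate alternative; the original paper instead reduces $1$-STAT to VSTAT via a simulation argument, but since the survey gives no details either path is acceptable. One small correction: the $A^+/A^-$ split is needed to make the lower bound $\langle q,\sum f_i\rangle \ge |A^+|\tau$ hold with a consistent sign, not to handle the absolute value in $\chi_D$; the latter only helps you, since $\bigl\|\sum f_i\bigr\|_D^2 = \sum_{i,j}\langle f_i,f_j\rangle_D \le \sum_{i,j}\chi_D(D_i,D_j)$.
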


The proof by~\citet*{Szorenyi09} of the weaker version of Theorem~\ref{thm:sqdimlb} (of the SQ-DIM lower bound for CSQs) 
gives the intuition for this claim,
where we can observe that the result in Equation~\ref{eq:szorenyibd} can be derived so long as the 
average correlation between $f_i,f_j \in A$ is bounded, where $A$ is a large enough set of functions.


We note the many differences  from (or extensions to) SQ-DIM. First, this model has no need for labels.
Second, the notion of correlation is denoted not by $\gamma$ but rather by by $\bar{\gamma}$, which stands for
\emph{average} (not worst-case) correlation.  Third, $d$ is disconnected from $\bar{\gamma}$ in the definition.
And finally a new type of oracle (VSTAT) is considered.

The main application of this model is to give lower bounds for new types of problems.  In the next section
we give the lower bound provided by~\citet*{FeldmanGRVX17} for the planted clique problem.

\subsubsection{Planted clique: an application of statistical dimension}\label{sec:plantedclique}

Consider the long-standing \emph{planted clique} problem, introduced by \citet*{Jerrum92}, of detecting a $k$-clique randomly induced
in a $G(n,\frac{1}{2})$ Erd\"{o}s-R\'{e}nyi random random graph instance.
Information-theoretically, this is possible for $k > 2\log(n)+1$, but
the state-of-the-art polynomial-time algorithm~\citep*{AlonKS98} uses spectral techniques to recover cliques of size $k > \Omega(\sqrt{n})$.
For the last two decades, this bound has eluded improvement.

Statistical algorithms help to explain why.  SDA lower bounds show that statistical algorithms cannot efficiently recover cliques of size $O(n^{1/2-\epsilon})$.
To use the SDA machinery, we first need to define a distributional version of planted clique.\\

\begin{problem}[distributional planted $k$-biclique]
For k, $1 \le k \le n$, and a subset of $k$ indices $S \subseteq \{1, 2,\ldots, n\}$.
The input distribution $D_S$ on vectors $x \in \{0,1\}^n$ is defined as follows: w.p.\
$1 -k/n$, $x$ is uniform over $\{0,1\}^n$; and w.p.\ $k/n$, $x$ is such that its $k$
coordinates from $S$ are set to $1$, and the remaining coordinates are uniform in $\{0, 1\}$.
The problem is to find the unknown subset $S$.
\end{problem}

An example is given in Figure~\ref{fig:biclique}.

\begin{figure}[h]
\begin{center}
\includegraphics[scale=0.25]{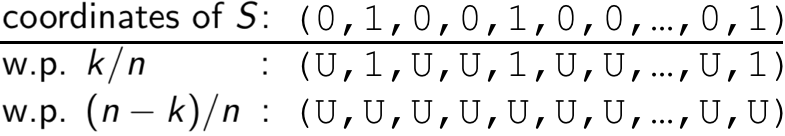}
\end{center}
\caption{An example distributional planted biclique instance.}\label{fig:biclique}
\end{figure}

Now we can analyze the statistical dimension for the planted clique problem.\\

\begin{theorem}[\citet*{FeldmanGRVX17}]
For $\epsilon \ge 1/\log n$ and $k \le n^{1/2-\epsilon}$, let $\mathcal{D}$ be the set of all planted $k$-clique
distributions.  Then $$\mathrm{SDA}_U(\mathcal{D},2^{\ell+1}k^2/n^2) \ge n^{2\ell\delta}/3.$$
\end{theorem}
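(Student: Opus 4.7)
The plan is to compute the pairwise correlations $\chi_U(D_S, D_T)$ in closed form, reduce the SDA inequality to an upper bound on $\E_{S, T \in \mathcal{D}'}[2^{|S \cap T|}]$ for every dense $\mathcal{D}' \subseteq \binom{[n]}{k}$, and then close that estimate using a hypergeometric tail bound combined with the hypothesis $k \le n^{1/2-\epsilon}$.

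First I would compute $\chi_U(D_S, D_T)$ exactly. Since $D_S(x)/U(x) = (1-k/n) + (k/n)\,2^k \mathbf{1}[x_S = \mathbf{1}]$, we have $D_S/U - 1 = (k/n)(2^k \mathbf{1}[x_S = \mathbf{1}] - 1)$, and expanding the inner product under $U$ using $\E_U[\mathbf{1}[x_{S \cup T} = \mathbf{1}]] = 2^{-(2k - |S\cap T|)}$ yields
\[
\chi_U(D_S, D_T) \;=\; \left(\frac{k}{n}\right)^{\!2}\!\left(2^{|S\cap T|} - 1\right).
\]
Identifying each $D_S$ with $S \in \binom{[n]}{k}$, the target inequality $\rho(\mathcal{D}', U) \le 2^{\ell+1}k^2/n^2$ reduces to showing $\E_{S,T \in \mathcal{D}'}[2^{|S\cap T|}] \le 2^{\ell+1} + 1$ for every $\mathcal{D}' \subseteq \binom{[n]}{k}$ with $|\mathcal{D}'| \ge \binom{n}{k}/d$, where $d = n^{2\ell\epsilon}/3$.

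Next I would split the expectation at intersection size $\ell$. Contributions with $|S \cap T| \le \ell$ sum to at most $2^\ell$ trivially. For $j > \ell$ the density of $\mathcal{D}'$ gives the key single-loss estimate
\[
\Pr_{S,T \in \mathcal{D}'}\!\left[|S\cap T| = j\right] \;\le\; \frac{\binom{k}{j}\binom{n-k}{k-j}}{|\mathcal{D}'|} \;\le\; d \cdot \frac{\binom{k}{j}\binom{n-k}{k-j}}{\binom{n}{k}},
\]
obtained by fixing $S \in \mathcal{D}'$ and letting $T$ range freely in $\binom{[n]}{k}$ (so moving from $|\mathcal{D}'|^{-2}$ to $|\mathcal{D}'|^{-1}$ costs only a factor of $d$, not $d^2$). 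Applying $\binom{k}{j} \le (ek/j)^j$ and $\binom{n-k}{k-j}/\binom{n}{k} \le (2k/n)^j$, the hypergeometric probability is at most $(2ek^2/(jn))^j$, and $k^2/n \le n^{-2\epsilon}$ then makes $\sum_{j > \ell} 2^j\, d\, (2ek^2/(jn))^j$ a rapidly decaying geometric series dominated by its $j = \ell + 1$ term. Substituting $d = n^{2\ell\epsilon}/3$, the factor $n^{2\ell\epsilon}$ cancels the $(k^2/n)^\ell$ part of the leading term, leaving an $O(2^\ell \, n^{-2\epsilon})$ bound on the tail, which under $\epsilon \ge 1/\log n$ is at most $2^\ell + 1$. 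Combined with the head this gives $\E_{\mathcal{D}'}[2^{|S\cap T|}] \le 2^{\ell+1} + 1$, as required.

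The main obstacle I expect is making the constants in $\bar{\gamma} = 2^{\ell+1} k^2/n^2$ line up: the target constant is essentially tight, so the splitting threshold, the geometric-series factor, and the constant $1/3$ in the definition of $d$ all have to conspire. A secondary technical point is that the crude estimate $\binom{k}{j}\binom{n-k}{k-j}/\binom{n}{k} \le (2k^2/n)^j$ is \emph{not} sharp enough: summing $\sum_{j > \ell} 2^j (2k^2/n)^j = \sum_{j > \ell} (4k^2/n)^j$ would introduce a stray $4^\ell$ factor that breaks the bound. One genuinely needs the $j$-dependent decay $(2ek^2/(jn))^j$ coming from $\binom{k}{j} \le (ek/j)^j$, i.e., the $j!$ in the denominator of $\binom{k}{j}$ does real work in this proof.
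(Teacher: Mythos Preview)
The survey does not prove this theorem at all; it simply states the result and cites \citet*{FeldmanGRVX17}, then moves on to its corollary for planted clique. There is therefore nothing in the paper to compare your argument against. (Incidentally, the exponent $n^{2\ell\delta}$ in the paper is a typo for $n^{2\ell\epsilon}$, which you silently corrected.)

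That said, your outline is the standard route taken in the original \citet*{FeldmanGRVX17} paper: compute $\chi_U(D_S,D_T)=(k/n)^2(2^{|S\cap T|}-1)$ exactly, reduce the SDA condition to bounding $\E_{S,T\in\mathcal{D}'}[2^{|S\cap T|}]$, use the ``single loss of $d$'' trick (fix $S\in\mathcal{D}'$, let $T$ range over all of $\binom{[n]}{k}$) to reduce to a hypergeometric tail, and then bound that tail using $\binom{k}{j}\le(ek/j)^j$ together with $k^2/n\le n^{-2\epsilon}$. Your observation that the cruder bound $(2k^2/n)^j$ fails because it introduces a spurious $4^\ell$ is a correct and useful diagnostic. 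The only point that would need care in a full write-up is the endgame arithmetic: after substituting $d=n^{2\ell\epsilon}/3$, the leading tail term is of order $(4e/(\ell+1))^{\ell+1}/(3n^{2\epsilon})$, which is bounded by a universal constant divided by $n^{2\epsilon}$ rather than literally $O(2^\ell n^{-2\epsilon})$; since $n^{2\epsilon}\ge e^2$ under $\epsilon\ge 1/\log n$, this is still comfortably below $2^\ell+1$ for all $\ell\ge 1$, but the constants deserve a line of verification rather than an $O(\cdot)$.
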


Using Theorem~\ref{thm:sdabd}, we can get the following lower bound on the number of queries as a corollary of the above result.
For simplicity, we only give the lower bound for the VSTAT oracle, which is the strongest of the lower bounds, below.\\

\begin{corollary}[\citet*{FeldmanGRVX17}]
For any constant $\epsilon > 0$ and any $k \le n^{1/2-\epsilon}$ , and $r > 0$,
to solve distributional planted $k$-biclique with probability $\ge 2/3$,
any statistical algorithm requires
at least $n^{\Omega(\log r)}$
queries to VSTAT$(.\ ,n^2/(rk^2))$.
\end{corollary}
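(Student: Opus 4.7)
The plan is to combine the preceding SDA lower bound for distributional planted $k$-biclique with the VSTAT portion of Theorem~\ref{thm:sdabd}, choosing the free parameter $\ell$ so that the tolerance $c_1/\bar{\gamma}$ coming out of the theorem matches the target tolerance $n^2/(rk^2)$ in the corollary. Concretely, the previous theorem tells us that for any $\ell$, $\mathrm{SDA}_U(\mathcal{D}, 2^{\ell+1}k^2/n^2) \ge n^{2\ell\epsilon}/3$, so we plug $\bar{\gamma} = 2^{\ell+1}k^2/n^2$ into Theorem~\ref{thm:sdabd} and get a lower bound of $n^{2\ell\epsilon}/3$ queries to VSTAT$(\,\cdot\,, c_1/\bar{\gamma})$.

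Next, I would solve for $\ell$ in terms of $r$. Setting $c_1/\bar{\gamma} = n^2/(rk^2)$ yields $2^{\ell+1} = c_1 r$, i.e.\ $\ell = \log_2(c_1 r) - 1 = \Theta(\log r)$ (for $r$ at least some absolute constant; for smaller $r$ the bound is vacuous and the statement holds trivially). Substituting back into the SDA estimate, the query lower bound becomes
\[
\frac{1}{3}\, n^{2\epsilon\,\ell} \;=\; \frac{1}{3}\, n^{2\epsilon(\log_2(c_1 r) - 1)} \;=\; n^{\Omega(\log r)},
\]
where the last step uses that $\epsilon$ is a positive constant, so the exponent $2\epsilon\ell$ is $\Omega(\log r)$, absorbing the constant $c_1$ and the $1/3$ prefactor into the $\Omega(\cdot)$.

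The main technical obstacle I anticipate is bookkeeping with the monotonicity of VSTAT's tolerance: Theorem~\ref{thm:sdabd} lower-bounds the number of queries at the specific tolerance $c_1/\bar{\gamma}$, and to port this to the exact tolerance $n^2/(rk^2)$ stated in the corollary I need that a query to VSTAT at the stated tolerance is at least as informative as one at the tolerance coming out of the theorem, which requires tracking whether $c_1/\bar{\gamma}$ is $\ge$ or $\le n^2/(rk^2)$ and choosing $\ell$ on the correct side. One also needs $\ell$ to be a valid integer (or the SDA statement to extend to real $\ell$), and to ensure $k \le n^{1/2-\epsilon}$ keeps the exponent $n^{2\ell\epsilon}$ meaningfully growing; both are straightforward once the matching constant $c_1$ is pinned down. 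Apart from this constant-chasing, the corollary is a direct application of the two previously stated results.
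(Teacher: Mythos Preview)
Your proposal is correct and matches the paper's approach: the paper presents this corollary as an immediate consequence of plugging the SDA lower bound for distributional planted $k$-biclique into Theorem~\ref{thm:sdabd}, without spelling out the choice of $\ell$ or the constant-chasing, which is exactly what you do. Your additional remarks about monotonicity of VSTAT and integrality of $\ell$ go beyond what the paper writes but are the right points to check.
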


An interpretation of this bound says that we would need an exponential number of queries
of the precision that a ``sample size" of $n$ would give us, which is all we get in a ``real-world" planted-clique instance.

\subsection{Evolvability}

Statistical queries can also help to better understand biological evolution as an algorithmic process.

\citet*{Valiant09} defined the evolvability framework to model and formalize Darwinian evolution, with
the goal of understanding what is ``evolvable."  This requires some definitions, and 
we begin with the most basic concept of an evolutionary algorithm.\\

\begin{definition}[evolutionary algorithm]
An \underline{evolutionary algorithm} $A$ is defined by a pair $(R, M)$ where
\begin{itemize}
\item $R$, the representation, is a class  of functions from $X$ to $\{-1,1\}$.
\item $M$, the mutation, is a randomized algorithm that, given $r \in R$ and an $\epsilon > 0$, outputs an $r' \in R$ with probability $\Pr_A(r,r')$.
\end{itemize}
$\mathtt{Neigh}_A(r,\epsilon) =$ set of $r'$ that $M(r,\epsilon)$ may output (w.p.\ $1/p(n,1/\epsilon)$).
\end{definition}

Then we define the notion of a performance of a given representation with respect to an ideal function (that we are trying to evolve or applroximately
evolve).\\

\begin{definition}[performance and empirical performance] Let  $f: X \rightarrow \{-1,1\}$ be an ideal function.\\
The \underline{performance} of $r \in R$ with respect to $f$ is
$$\mathtt{Perf}_{f,D}(r) = \E_{x \sim D} [f(x)r(x)].$$
The \underline{empirical performance} of $r$ on $s$ samples ${x_1, \ldots, x_s}$  from $D$ is 
$$\mathtt{Perf}_{f,D}(r,s) = \frac{1}{s}\sum_{i}^{t} f(x_i)r(x_i).$$
\end{definition}

And as in biological evolution, in this model, selection operates on the representations to produce the next
generation of representations.\\

\begin{definition}[selection]
\underline{Selection} $\mathtt{Sel}[\tau,p,s](f,D,A,r)$ with parameters: tolerance $\tau$, pool size $p$, and sample size $s$ operating
on  $f,D,A=(R,M),r$ defined as before, outputs $r^+$ as follows.
\begin{enumerate}
\item Run $M(r,\epsilon)$ $p$ times and let $Z$ be the set of $r'$s obtained.  
\item For $r' \in Z$, let $\Pr_Z(r')$ be the frequency of $r'$. 
\item For each $r' \in Z \cup \{r\}$ compute $v(r') = \mathtt{Perf}_{f,D}(r',s)$
\item Let $\mathtt{Bene}(Z) = \{r'\ |\ v(r') \ge v(r) + \tau\}$ and 
$\mathtt{Neut}(Z) = \{r'\ |\ |v(r') - v(r)| + \tau\}$
\item
 if $\mathtt{Bene} \neq \emptyset,$\ \hskip.32in output $r^+$ proportional to $\Pr_Z(r^+)$ in $\mathtt{Bene}$ \\
 else if $\mathtt{Neut} \neq \emptyset,$\ \ output $r^+$ proportional to $\Pr_Z(r^+)$ in $\mathtt{Neut}$ \\
 else \hskip.81in output $\perp$
\end{enumerate}
\end{definition}

This lets us define what we mean by a function class being evolvable by an algorithm.\\

\begin{definition}[evolvability by an algorithm]
For concept class $C$ over $X$, distribution $D$, and evolutionary algorithm $A$,
we say that the class $C$ is \underline{evolvable over $D$ by $A$}
if there exist polynomials, $\tau(n, 1/\epsilon)$, $p(n, 1/\epsilon)$, $s(n, 1/\epsilon)$, and  $g(n, 1/\epsilon)$ such that for every $n$, $c^* \in C$, $\epsilon > 0$, and every 
$r_0 \in R$, with probability at least $1-\epsilon$, the random 
sequence $r_i \leftarrow \mathtt{Sel}[\tau,p,s](c^*,D,A,r_{i-1})$ will yield a $r_g$ s.t.\
$\mathtt{Perf}_{c^*,D}(r_g) \ge 1-\epsilon$.
\end{definition}

Finally, we can define evolvability of a concept class.\\

\begin{definition}[evolvability of a concept class]
A concept class $C$ is \underline{evolvable} (over $\mathcal{D}$) if
there exists an evolutionary algorithm $A$ so that for any 
for any $D (\in \mathcal{D})$ over $X$, $C$ is evolvable over $D$ by $A$.
\end{definition}

%
%
%
%
%
%
%
%
%
%
%
%
%
%
%
%
%
%


The main result here is that it turns out that evolvability is equivalent to learnability with CSQs, as stated 
below.\\

\begin{theorem}[\citet*{Feldman08}]
$C$ is evolvable if and only if $C$ is learnable with CSQs (over $\mathcal{D}$).
\end{theorem}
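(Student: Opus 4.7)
My plan is to prove the two directions separately, with the forward direction being essentially a structural observation and the reverse direction being the substantive construction.

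For the forward direction (evolvability $\Rightarrow$ CSQ-learnability), the key observation is that the performance function $\mathtt{Perf}_{f,D}(r) = \E_{x \sim D}[f(x)r(x)] = \langle f, r\rangle_D$ is literally the quantity returned by $\mathrm{CSQ}(r, \tau)$. Given an evolutionary algorithm $A = (R, M)$ that evolves $C$ over $\mathcal{D}$ with polynomials $\tau, p, s, g$, I would simulate a generation of evolution by calling $\mathrm{CSQ}(r', \tau/2)$ for the current $r$ and for each of the $p$ mutations $r' \in Z$ produced by $M(r, \epsilon)$. Picking tolerance $\tau/2$ is enough to correctly classify each $r'$ as beneficial, neutral, or neither (any remaining ambiguity can be resolved by the selection rule in a consistent way). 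The total number of CSQ queries is $p \cdot g = \poly(n, 1/\epsilon)$, each with tolerance $1/\poly(n, 1/\epsilon)$, and each $r'$ is evaluable in the same time as the mutation can produce it. After $g$ generations the simulated trajectory produces some $r_g$ with $\langle c^*, r_g\rangle_D \ge 1 - \epsilon$, which is an $\epsilon$-accurate hypothesis.

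For the reverse direction (CSQ-learnability $\Rightarrow$ evolvability), the plan is to embed the execution of a CSQ learner $\mathcal{A}$ inside the state evolved by selection. Fix such an $\mathcal{A}$ making at most $T = \poly(n,1/\epsilon)$ queries with tolerance $\tau_0 = 1/\poly(n,1/\epsilon)$. I would take the representation class $R$ to consist of functions indexed by a ``transcript'' state encoding (i) the sequence of CSQ queries $\mathcal{A}$ has asked so far together with the discretized answers $\mathcal{A}$ received, and (ii) a ``mode'' bit indicating whether the state is currently querying or has produced a final hypothesis. When the state encodes a completed run with hypothesis $h$, the represented function is just $h$ itself, so its performance is $\langle c^*, h\rangle_D$. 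For ``query mode'' states asking CSQ$(h_t, \tau_0)$, the trick is to define a carefully chosen family of candidate next states, one for each possible discretized answer $a$ in a grid of spacing $\tau_0/3$ covering $[-1,1]$, so that the represented function of the candidate indexed by $a$ has performance that is a controlled monotone function of $\langle h_t, c^*\rangle_D - a$. Then only candidates whose guessed $a$ is within $\tau_0$ of the true correlation will appear in $\mathtt{Bene} \cup \mathtt{Neut}$ under an appropriate tolerance, and any of those is a valid answer from the CSQ oracle, so selection can advance the transcript consistently with some legal run of $\mathcal{A}$.

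The main obstacle, and the part that requires the most care, is designing this candidate family so that (a) the performance-to-answer correspondence is tight enough to distinguish correct from incorrect answers within $\tau_0$, (b) the mutation $M$ can produce each candidate with probability at least $1/\poly$, and (c) the evolving function's performance is monotonically improving along the simulated trajectory so that $\mathtt{Bene}$ or $\mathtt{Neut}$ is always nonempty and $\bot$ is never returned. A clean way to enforce (c) is to bundle the evolving hypothesis with a small additive ``progress'' component whose weight is chosen so that any transcript-advancing mutation has strictly better performance than the current state, while incorrect-answer candidates sit far enough below to be filtered out. Finally, I would set $g$ to be $T$ plus a polynomial overhead for the final boosting-of-performance step that converts $\mathcal{A}$'s output hypothesis into the evolved representation, verify that $\tau$, $p$, $s$ are all polynomial, and conclude that $A$ evolves $C$ over $\mathcal{D}$.
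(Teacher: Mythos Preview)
Your forward direction is fine and matches the paper: selection only ever uses empirical performances $\mathtt{Perf}_{f,D}(r',s)$, whose expectations are precisely correlational queries, so a CSQ oracle can simulate any generation; the paper (following \citet*{Valiant09}) calls this direction ``immediate.''

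The reverse direction, however, has a real gap. You propose one candidate per grid value $a$, with the candidate's performance being ``a controlled monotone function of $\langle h_t,c^*\rangle_D - a$,'' so that selection isolates the $a$'s within $\tau_0$ of the true correlation. But the performance of \emph{any} representation $r'$ is simply $\langle r',c^*\rangle_D$; since you do not know $c^*$, the only way to make this depend on $\langle h_t,c^*\rangle_D$ is to build $r'$ from $h_t$ (e.g.\ $r'=h_t$ with probability $p_a$ and $\pm 1$ otherwise), which gives performance $p_a\langle h_t,c^*\rangle_D$. This is monotone in $\langle h_t,c^*\rangle_D$, not in $\langle h_t,c^*\rangle_D - a$, and no choice of $p_a$ makes the performance \emph{peak} at the correct $a$. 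At best you get a one-sided condition (all $a$ below the truth land in $\mathtt{Bene}$), and selection then samples proportionally from that set rather than returning the truth. Your sentence ``only candidates whose guessed $a$ is within $\tau_0$ of the true correlation will appear in $\mathtt{Bene}\cup\mathtt{Neut}$'' is exactly the step that cannot be realized.

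The paper's (i.e.\ Feldman's) key idea is to insert an intermediate \emph{threshold} oracle
\[
\mathrm{CSQ}_{>}(r,\theta,\tau)=
\begin{cases}
1 & \text{if } \E_D[r(x)c^*(x)]\ge \theta+\tau,\\
0 & \text{if } \E_D[r(x)c^*(x)]\le \theta-\tau,\\
0\text{ or }1 & \text{otherwise},
\end{cases}
\]
and proceed in two steps: first show $\mathrm{CSQ}_{>}$ simulates $\mathrm{CSQ}$ (binary search over $\theta$), then show that a single generation of selection implements one $\mathrm{CSQ}_{>}$ call, since comparing a mutant of performance proportional to $\langle h_t,c^*\rangle_D$ against a current state of calibrated performance $\theta$ is exactly a threshold test. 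This sidesteps the need to isolate a specific grid value $a$ and is the missing ingredient in your plan.
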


That EVOLVABLE $\subseteq$ CSQ is immediate~\citet*{Valiant09}. 
The other direction involves first showing that 
$$\mathrm{CSQ}_{>}(r,\theta,\tau) = 
\begin{cases}
 1 \mathrm{\ \ \ \ \ \ \ \ \ \ \ \ if\ } \E_D[r(x)c^*(x)] \ge \theta + \tau\\
 0 \mathrm{\ \ \ \ \ \ \ \ \ \ \ \ if\ } \E_D[r(x)c^*(x)] \le \theta -\tau\\
 0\ \mathrm{or}\ 1\ \ \ \ \ \ \mathrm{otherwise}
 \end{cases} $$
 can simulate CSQs.
 Then an evolutionary algorithm is made that simulates queries to a CSQ$_>$ oracle.

\subsubsection{Sexual evolution}

Valiant's model of evolvability is asexual.  \citet*{Kanade11} extended evolvability to include recombination by replacing
 $\mathtt{Neigh}$ (neighborhood) with $\mathtt{Desc}$ (descendants).\\

\begin{definition}[recombinator]

For polynomial $p(·, ·)$, a $p$-bounded
\underline{recombinator} is a randomized algorithm that
takes as input two representations $r_1, r_2 \in R$ and $\epsilon$ 
and outputs a set of representations $\mathtt{Desc}(r_1,r_2,\epsilon) \subseteq R.$
Its running time is bounded by $p(n,1/\epsilon)$.
$\mathtt{Desc}(r_1,r_2,\epsilon)$
is allowed to be empty which is interpreted as $r_1$ and $r_2$
being unable to mate.
\end{definition}

Now we can examine evolution under recombination.\\

\begin{definition}[parallel CSQ]
A \underline{parallel CSQ} learning algorithm uses $p$ (polynomially bounded) processors and we 
assume that there is a common clock which defines
parallel time steps. During each parallel time step a processor
can make a CSQ query, perform polynomially-bounded
computation, and write a message that can be read by every
other processor. We assume that communication happens at
the end of each parallel time step and on the clock. The CSQ
oracle answers all queries in parallel.
\end{definition}

Sexual evolution is equivalent to parallel CSQ learning.\\

\begin{theorem}[\citet*{Kanade11}]
If $C$ is parallel CSQ learnable in $T$
query steps, then $C$ is evolvable under recombination in
$O(T \log^2(n/\epsilon))$ generations.
\end{theorem}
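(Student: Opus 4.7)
The plan is to simulate the parallel CSQ learner $\mathcal{A}$ by a sexual evolutionary algorithm whose representations glue together all $p$ processors' states and whose selection steps implement individual CSQ queries via the asexual reduction already sketched in the previous subsection. Recall that a single CSQ query with tolerance $\tau$ can be simulated in $O(\log(1/\tau)) = O(\log(n/\epsilon))$ generations: one first reduces CSQ to the threshold oracle $\mathrm{CSQ}_{>}$, and then performs binary search over candidate thresholds, with each generation's $\mathtt{Sel}$ step testing one threshold.

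First, I would define composite representations of the form $r = (r_1,\ldots,r_p, m_1,\ldots,m_p)$, where $r_i$ holds processor $i$'s internal state and hypothesis candidate and $m_i$ holds its most recent broadcast message. To make a single scalar performance $\mathtt{Perf}_{c^*,D}(r)$ sensitive to each $r_i$ independently, I would partition the domain $X$ into $p$ disjoint slices $X_1,\ldots,X_p$ (equivalently, reweight $D$ over a tagged domain $X\times[p]$) so that $\mathtt{Perf}_{c^*,D}(r)$ decomposes into a weighted sum of per-processor correlations $\E_{x\sim D\mid X_i}[c^*(x)\, r_i(x)]$. This way selection can read out each processor's CSQ value through a coordinated binary-search schedule, without serializing the $p$ processors.

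Next, I would specify the recombinator $\mathtt{Desc}$ to play two roles inside a single parallel query step of $\mathcal{A}$: (i) locally mutate a single coordinate $r_i$ to try out new internal states consistent with candidate CSQ answers, and (ii) propagate the $m_i$ components along a fixed binary-tree topology on $[p]$ so that after $O(\log p) = O(\log(n/\epsilon))$ recombination generations every processor has seen every other processor's message. A parallel step therefore consists of $O(\log(n/\epsilon))$ generations of Feldman-style selection that simulate the $p$ CSQ queries in parallel, followed by $O(\log(n/\epsilon))$ generations of tree-recombination that aggregate and broadcast the outcomes. This yields $O(\log^2(n/\epsilon))$ generations per parallel step, and $O(T\log^2(n/\epsilon))$ generations in total. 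The final hypothesis is read off from a distinguished ``leader'' coordinate.

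The main obstacle is reconciling the parallelism of CSQ with the single-scalar nature of selection: $\mathtt{Sel}$ ranks descendants by one number, namely $\mathtt{Perf}_{c^*,D}$, so a naive encoding would force the $p$ queries to be simulated one after another and lose the parallel speedup. The partition-of-$X$ trick above is designed to pack the $p$ per-processor correlations into one additive scalar, but one still has to show that the binary-search selection steps for different processors can be staged in lockstep so that each processor makes exactly one CSQ-oracle's-worth of progress per $O(\log(n/\epsilon))$ generations without cross-interference. Verifying that the necessary $\mathtt{Bene}/\mathtt{Neut}$ margins survive this stacking, and that the recombinator's descendants cover the required intermediate hypotheses with probability $1/\mathrm{poly}(n,1/\epsilon)$, is the technical heart of the argument; once these checks pass, the generation count follows by summing the two logarithms (binary search and tree aggregation) over the $T$ parallel rounds.
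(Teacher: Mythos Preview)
The survey does not supply a proof of this theorem; it states Kanade's result and cites the original paper, so there is no in-paper proof to compare against.

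That said, your high-level architecture --- composite representations carrying all $p$ processor states plus message buffers, recombination used to implement a tree-structured broadcast, and Feldman's $\mathrm{CSQ}_{>}$ threshold machinery to extract correlations via selection --- is indeed the shape of Kanade's construction.  Two points in your write-up would not survive as stated, however.

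First, the domain-partition device is problematic.  If you literally partition $X$ into slices $X_1,\ldots,X_p$, then the correlation you read off for processor $i$ is $\E_{x\sim D\mid X_i}[c^*(x)r_i(x)]$, not $\E_{x\sim D}[c^*(x)r_i(x)]$, so you are not simulating the CSQ queries the parallel learner actually asks.  Your parenthetical fix (tag the domain as $X\times[p]$) repairs the marginal but changes the learning problem: the theorem promises evolvability over $D$ on $X$, and you still owe an argument that evolvability over the tagged distribution yields evolvability over the original one.  Kanade avoids this by keeping the domain fixed and engineering the \emph{hypotheses} so that the single scalar performance moves in a controlled way when any one processor's component changes; the per-processor decoupling happens in the representation, not in $X$.

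Second, your generation accounting does not add up.  ``$O(\log(n/\epsilon))$ generations of selection followed by $O(\log(n/\epsilon))$ generations of tree-recombination'' is $O(\log(n/\epsilon))$, not $O(\log^2(n/\epsilon))$.  In Kanade's argument the two logarithms are genuinely nested rather than sequential, so you should revisit how the threshold-refinement phase and the aggregation phase interleave before claiming the bound.
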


\subsection{Differential privacy and adaptive data analysis}

Our final application is to differentially private learning and to adaptive data analysis,
both of which are closely connected to each other.

\subsubsection{Differentially private learning}

The differential privacy of an algorithm captures an individual's ``exposure" of being in a database 
when that algorithm is used \citep*{DworkMNS06}.\\

\begin{definition}[differential privacy]
A probabilistic mechanism $\mathcal{M}$ satisfies $(\alpha,\beta)$-\underline{differential privacy}\footnote{Oftentimes,
$\epsilon$ and $\delta$ are used to define differential privacy.  We instead use $\alpha$ and $\beta$ so as to not
confuse these variables for the $\epsilon$ and $\delta$ parameters in PAC learning.} 
 if for any two samples
$S, S'$ that differ in just one example, for any outcome $z$
$$\Pr[\mathcal{M}(S) = z] \le e^\alpha \Pr[\mathcal{M}(S') =z]+\beta.$$
If $\beta=0$, we simply call $\mathcal{M}$ $\alpha$-differentially private.
\end{definition}

\begin{figure}[h]
\begin{center}
\includegraphics[scale=0.30]{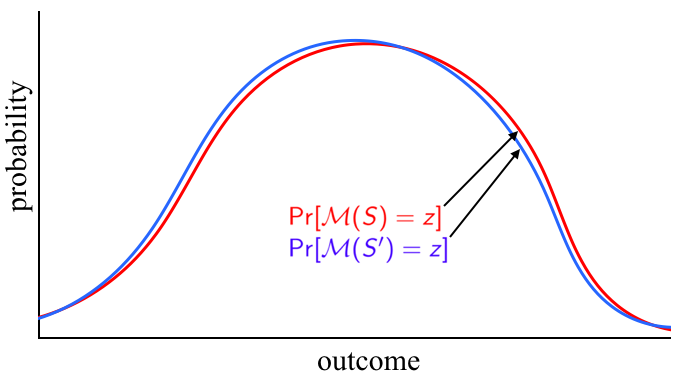}
\end{center}
\caption{An illustration of a possible output of a differentially private mechanism.  Here, the datasets $S$
and $S'$ differ by one example and their respective red and blue distributions over outputs differ by an amount
that is bounded by the parameters $\alpha$ and $\beta$.}
\end{figure}

We now define the Laplace mechanism, which can be used to guarantee differential privacy.\\

\begin{definition}[Laplace mechanism]
Given $n$ inputs in $[0,1]$, the \underline{Laplace mechanism} for outputting their
average computes the true average value $a$ and then outputs $a + x$ where $x$ is drawn from the
Laplace density with parameters $\left(0, \frac{1}{\alpha n}\right)$:
$$\mathrm{Lap}_{(0,\frac{1}{\alpha n})}(x) = \left(\frac{\alpha n}{2}\right)e^{-|x|\alpha n}.$$
\end{definition}

\begin{theorem}[\citet*{DworkMNS06}]
The Laplace mechanism satisfies $\alpha$-differential privacy, and moreover has the property
that with probability $\ge 1 - \delta$, the error added to the true average is
$O\left(\frac{\log (1/\delta')}{\alpha n}\right)$.
\end{theorem}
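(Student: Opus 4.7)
The plan is to prove the two claims separately, with privacy following from a direct density ratio computation and the accuracy bound following from the tail of a Laplace random variable.

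For the privacy claim, I would fix two neighboring samples $S, S'$ that differ in exactly one entry and let $a, a'$ denote their true averages. Since each input lies in $[0,1]$, changing a single entry shifts the average by at most $1/n$, so $|a-a'| \le 1/n$. The output density of the mechanism on input $S$ evaluated at $z$ is just the Laplace density centered at $a$, namely $(\alpha n/2) e^{-|z-a|\alpha n}$, and similarly for $S'$. Taking the ratio, I would apply the reverse triangle inequality $||z-a'| - |z-a|| \le |a - a'|$ to obtain
\[
\frac{\Pr[\mathcal{M}(S)=z]}{\Pr[\mathcal{M}(S')=z]} \;=\; e^{(|z-a'|-|z-a|)\alpha n} \;\le\; e^{|a-a'|\alpha n} \;\le\; e^{\alpha}.
\]
This gives $\alpha$-differential privacy (with $\beta = 0$), and the same inequality in the other direction handles the symmetric bound.

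For the accuracy claim, the added error is $x \sim \mathrm{Lap}(0, 1/(\alpha n))$, and I would bound its magnitude via the Laplace tail. Direct integration yields
\[
\Pr[|x| > t] \;=\; \int_t^{\infty} \alpha n\, e^{-u \alpha n}\, du \;=\; e^{-t \alpha n}.
\]
Setting the right-hand side equal to $\delta$ and solving gives $t = \log(1/\delta)/(\alpha n)$, so with probability at least $1-\delta$ the error is $O(\log(1/\delta)/(\alpha n))$, as claimed. (I note the statement appears to have a typographical mismatch between $\delta$ and $\delta'$; the derivation above yields the natural clean bound.)

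Neither step presents a real obstacle: the main subtlety is just recognizing that the global sensitivity of the averaging function is $1/n$ because of the normalization and the $[0,1]$ range of inputs — this is precisely what matches the scale parameter $1/(\alpha n)$ of the noise. Once that sensitivity is pinned down, privacy follows from a one-line triangle-inequality argument, and accuracy follows from the closed-form Laplace tail.
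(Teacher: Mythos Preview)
Your argument is correct and is precisely the standard proof: bound the global sensitivity of the average by $1/n$, compute the density ratio via the reverse triangle inequality to get $e^{\alpha}$, and read off the accuracy guarantee from the closed-form Laplace tail $\Pr[|x|>t]=e^{-t\alpha n}$. Your observation about the $\delta$/$\delta'$ mismatch in the statement is also apt.

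The paper itself does not supply a proof of this theorem; it is stated as a cited result from \citet*{DworkMNS06} and used as a black box for the subsequent theorem on differentially private SQ learning. So there is no approach in the paper to compare against, but what you have written is exactly the argument one finds in the original reference.
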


It turns out the statistical queries are a perfect class of functions for applying the Laplace
mechanism, which gives the result below.\\

\begin{theorem}[\citet*{DworkMNS06}]
If class $C$ is efficiently SQ learnable, then it is also efficiently
PAC learnable while satisfying $\alpha$-differential privacy, with time and
sample size polynomial in $1/\alpha$. In particular, if there is an algorithm that makes $M$ queries of
tolerance $\tau$ to learn $C$ to error $\epsilon$ in the SQ model, then a sample of size
$$
m = O\left(\left[\frac{M}{\alpha \tau} + \frac{M}{\tau^2}\right]\log\left(\frac{M}{\delta}\right)\right)
$$
is sufficient to PAC learn $C$ to error $\epsilon$ with probability $1-\delta$ while satisfying $\alpha$-differential
privacy.
\end{theorem}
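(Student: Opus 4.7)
My plan is to simulate each of the SQ algorithm's $M$ queries by taking an empirical estimate on a disjoint sub-sample and perturbing it with Laplace noise. Concretely, partition a sample $S$ of size $m$ into $M$ disjoint pieces $S_1,\ldots,S_M$ of size $m/M$ each, and dedicate $S_i$ to answering the $i$-th query $(q_i,\tau)$ that the SQ learner would ask. For each $i$, compute the empirical average $\hat{P}_i = (M/m)\sum_{x\in S_i} q_i(x,c(x))$ and feed back $\tilde{P}_i = \hat{P}_i + \xi_i$ as the oracle's answer, where $\xi_i$ is independent Laplace noise with scale $2M/(\alpha m)$.

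\textbf{Privacy.} Since $q_i$ takes values in $\{-1,1\}$, the sensitivity of $\hat{P}_i$ under changing a single element of $S_i$ is $2M/m$, so by the guarantee of the Laplace mechanism the release of $\tilde{P}_i$ is $\alpha$-differentially private with respect to $S_i$. Because the $S_i$ are disjoint and any individual's record belongs to at most one of them, parallel composition of differential privacy upgrades the per-query guarantee to an $\alpha$-differentially private joint release of $(\tilde{P}_1,\ldots,\tilde{P}_M)$, and hence of the final hypothesis by post-processing. This remains valid when the SQ algorithm is adaptive, since each $q_i$ depends only on previously released private outputs.

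\textbf{Accuracy.} The error $|\tilde{P}_i - \E_D[q_i(x,c(x))]|$ decomposes into a sampling error $|\hat{P}_i - \E_D[q_i(x,c(x))]|$ and the Laplace noise $|\xi_i|$. Hoeffding's inequality applied to $m/M$ independent $\{-1,1\}$-valued samples gives a sampling error of $O(\sqrt{M\log(M/\delta)/m})$ except with probability $\delta/(2M)$, and the Laplace tail gives $|\xi_i| = O(M\log(M/\delta)/(\alpha m))$ except with probability $\delta/(2M)$. A union bound over all $M$ queries then ensures every answer lies within $\tau$ of the true expectation with probability at least $1-\delta$, provided each error term is bounded by $\tau/2$. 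Solving the two resulting inequalities yields $m = O\!\bigl(\bigl[M/(\alpha\tau) + M/\tau^2\bigr]\log(M/\delta)\bigr)$, which is exactly the sample size claimed; with this $m$ the SQ oracle is faithfully simulated and the underlying SQ algorithm outputs a hypothesis with error at most $\epsilon$.

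\textbf{Main obstacle.} Nothing in the argument is conceptually deep, but two bookkeeping subtleties must be handled with care. First, sample splitting combined with parallel composition is what makes the bound come out in the form stated: using the whole sample with basic composition would force noise of scale $M/(\alpha m)$ per query (the $M/(\alpha\tau)$ term), but would actually improve the sampling term to $1/\tau^2$; it is the sub-sample size of $m/M$ that produces the $M/\tau^2$ contribution in the theorem. Second, one must verify that adaptivity of the SQ algorithm corrupts neither guarantee; privacy is preserved by parallel composition, and accuracy is preserved because each $S_i$ is independent of $q_i$, which is determined by outputs from $S_1,\ldots,S_{i-1}$ alone, so Hoeffding applies conditionally on the full transcript up to query $i$.
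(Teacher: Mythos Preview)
Your argument is correct. The paper gives only a one-sentence sketch, and it differs from yours in the privacy accounting: the paper answers every query on the \emph{entire} sample and adds Laplace noise calibrated so that each release is $\alpha/M$-differentially private, then invokes basic composition over the $M$ queries to obtain $\alpha$-privacy overall. You instead partition the sample into $M$ disjoint blocks and add noise for $\alpha$-privacy on each block, invoking parallel composition. Both routes yield the same Laplace scale $\Theta(M/(\alpha m))$ and hence the same $M/(\alpha\tau)$ term. The payoff of your sample-splitting approach is that the accuracy argument for \emph{adaptive} queries becomes immediate: since $S_i$ is independent of $q_i$, Hoeffding applies conditionally on the transcript, and the $M/\tau^2$ term falls out naturally from the reduced block size $m/M$. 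Under the paper's whole-sample approach the na\"{i}ve Hoeffding bound would actually give only a $1/\tau^2$ sampling term, but adaptivity of $q_i$ on $S$ (through earlier noisy answers) has to be handled separately---something the one-line sketch does not address. So your route is modestly different from, and arguably more careful than, the paper's, and you already identify this trade-off in your ``Main obstacle'' paragraph.
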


This is achieved by taking large enough sample and adding Laplace noise with scale parameter as to satisfy 
$\frac{\alpha}{M}$-differential privacy per query while staying within $\tau$ of the expectation of each query.

As we have seen SQ learnability is a sufficient condition for differentially-private learnability, but it is not a
necessary one.  It turns out, however, that
information-theoretically, SQ learnability is equivalent to a more restricted notion of privacy called
local differential privacy.

%

Informally, local differential privacy asks not only the output of the mechanism to be differentially private but
also the data itself to be differentially private with respect to the (possibly untrusted) mechanism.  Hence,
noise needs to be added to the data itself.

We state the connection below without providing details.\\

\begin{theorem}[\citet*{KasiviswanathanLNRS11}]
Concept class $C$ is locally differentially privately learnable if and
only if $C$ is learnable using statistical queries.\footnote{This result is only known for
the sequentially interactive model of local differential privacy.  The analogous
question for fully interactive differential privacy is still open.~\citep*{JosephMR20}}
\end{theorem}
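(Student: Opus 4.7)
My plan is to prove both directions separately, noting that one direction is essentially a simulation argument while the other requires more care.

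For the direction that SQ learnability implies local differential privacy, the approach is to replace each SQ query with a randomized-response style protocol. Given a query $(q,\tau)$, each user $i$ holding $(x_i, c(x_i))$ computes the bit $q(x_i, c(x_i)) \in \{-1,1\}$ locally, then releases it truthfully with probability $p = e^\alpha/(1+e^\alpha)$ and flipped otherwise. This local randomizer is $\alpha$-differentially private by inspection. The server averages the responses and debiases by dividing by $(2p-1)$; by Hoeffding, $m = O\bigl(\log(M/\delta)/(\tau^2(2p-1)^2)\bigr)$ fresh samples per query suffice to estimate $\mathbb{E}_D[q(x,c(x))]$ within $\tau$. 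Summing over all $M$ queries of the SQ algorithm gives a polynomial sample bound, and the parallel composition property of local DP (distinct samples per query) preserves $\alpha$-LDP overall.

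For the harder direction — locally DP learnable implies SQ learnable — the strategy is to simulate a sequentially interactive LDP protocol round by round using SQ queries to the underlying distribution. An LDP algorithm proceeds by, at each round $t$, choosing a local randomizer $R_t$ (depending on the previous transcript), picking one sample $(x_t, c(x_t)) \sim D$, releasing $z_t = R_t(x_t, c(x_t))$ and appending it to the transcript. To simulate step $t$ in the SQ model, I would estimate the distribution $P_t(z) = \mathbb{E}_{x \sim D}[\Pr[R_t(x, c(x)) = z]]$ and then sample $z_t$ from $\widehat{P}_t$. The key observation making this work is that $\alpha$-LDP implies the output distribution $\Pr[R_t(\cdot,\cdot) = z]$ lies in the range $[e^{-\alpha}\mu_z, e^\alpha \mu_z]$ for $\mu_z$ a fixed base measure, so only $O(e^{2\alpha})$ effective outcomes matter at constant accuracy. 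Concretely, for each candidate output $z$ in a suitable discretization, I issue the SQ with query function $q_z(x,y) = \Pr[R_t(x,y) = z]$ and tolerance $\tau = \Theta(1/(\mathrm{poly}\ \text{in transcript length}))$, then draw $z_t$ from the resulting estimated distribution.

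The main obstacle is the step where we sample from $\widehat{P}_t$: the output space of $R_t$ may be large or even infinite, so we cannot naively query $q_z$ for every $z$. The standard fix is to exploit $\alpha$-LDP to argue that we can replace $R_t$ by an equivalent randomizer whose output alphabet has size $O(e^{2\alpha}/\gamma)$ while changing the output distribution by at most $\gamma$ in total variation (this is a covering / rejection-sampling argument based on the bounded likelihood ratios). With such a reduction, the number of SQs per LDP round becomes polynomial in $1/\gamma$ and $e^\alpha$, and one must finally carry out a careful error-accumulation argument to ensure that, after $T$ rounds, the simulated transcript is within statistical distance $\epsilon$ of the true LDP transcript, which forces $\tau$ and $\gamma$ to shrink as $1/\mathrm{poly}(T/\epsilon)$. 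Since $T$ is polynomial by assumption, so are all SQ parameters, giving efficient SQ learnability and closing the equivalence.
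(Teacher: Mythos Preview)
The paper does not prove this theorem; it merely states the result, citing \citet*{KasiviswanathanLNRS11}, with the remark that the connection is stated ``without providing details.'' So there is no in-paper argument to compare against, and I can only comment on your sketch relative to the original source.

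Your SQ $\Rightarrow$ local DP direction via randomized response on fresh samples per query is correct and is essentially the construction in the cited paper.

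For the local DP $\Rightarrow$ SQ direction, your high-level plan of simulating the transcript round by round, at each step sampling $z_t$ from an SQ-estimated version of $P_t$, is the right one, and your final error-accumulation bookkeeping is appropriate. The questionable step is your handling of large output alphabets: you assert that $R_t$ can be replaced by an equivalent randomizer with alphabet size $O(e^{2\alpha}/\gamma)$ ``by a covering/rejection-sampling argument,'' but this alphabet-reduction lemma does not follow readily from bounded likelihood ratios, and it is not what the original proof does. The argument in \citet*{KasiviswanathanLNRS11} bypasses alphabet reduction entirely: one samples a candidate $z$ from a fixed reference distribution $\mu$ (e.g., $R_t$ applied to a dummy input), then issues a \emph{single} statistical query with query function $q(x,y)=\Pr[R_t(x,y)=z]/\bigl(e^{\alpha}\mu(z)\bigr)\in[e^{-2\alpha},1]$, and accepts $z$ with the returned probability; by $\alpha$-LDP the overall acceptance probability is $e^{-\alpha}$, so $O(e^{\alpha})$ expected SQs suffice per round regardless of the size of the output space. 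Your ``reduce the alphabet, then enumerate all $z$'' route would require a separate compression lemma you have not supplied, whereas switching to this direct rejection-sampling simulation closes the argument without it.
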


\subsubsection{Adaptive data analysis}
Interestingly, differential privacy has applications to an even newer of study called \emph{adaptive data analysis}, which 
was introduced by \citet*{DworkFHPRR15}.

The main question in this area asks to what extent it is possible to answer adaptive queries accurately given a sample without 
assuming anything about their complexity (e.g.\ without limiting their VC dimension or Rademacher complexity).\\



\begin{definition}[adaptive accuracy]
A mechanism $\mathcal{M}$ is \underline{$(\alpha,\beta)$-accurate} on a distribution $D$ and on queries $q_1,\ldots,q_k$, 
if for its responses $a_1,\ldots a_k$ we have
$$\Pr_{\mathcal{M}} [\max|q_i(D)-a_i|\le \alpha] \ge1-\beta.$$
\end{definition}
Note: there is also an analogous notion of $(\alpha,\beta)$ accuracy on a sample $S$.

A natural question
is how many samples from $D$ are needed to answer $k$ queries adaptively with $(\alpha,\beta)$-accuracy.
Because there is no assumption about the complexity of the class from which the $q_i$s come.  So, standard
 techniques don't apply.

Differential privacy, however, gives us the techniques needed to answer this question by providing a notion of stability that transfers to
guarantees of adaptive accuracy. 
The following is an example of such a \emph{transfer theorem}.\\
 
\begin{theorem}[\citet*{DworkFHPRR15}]
Let $\mathcal{M}$ be a mechanism that on sample $S \sim D^n$ 
answers $k$ adaptively chosen statistical queries, 
is $\left(\frac{\alpha}{64},\frac{\alpha \beta}{32}\right)$-private for
some $\alpha,\beta > 0$ and $\left(\frac{\alpha}{8}, \frac{\alpha \beta}{16}\right)$-accurate on $S$. Then $\mathcal{M}$ is
          $(\alpha, \beta)$-accurate on $D$.
\end{theorem}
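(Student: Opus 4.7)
The plan is to reduce the adaptive, multi-query guarantee to a single-query statement by a "max-monitor" post-processing argument, and then to combine a differential-privacy-to-generalization lemma with the assumed sample-accuracy.

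First, I would build an auxiliary mechanism $\mathcal{M}^{\star}$ that internally simulates the entire interaction of $\mathcal{M}$ with its (worst-case) adaptive analyst, and then outputs the single triple $(i^{\star}, q_{i^{\star}}, a_{i^{\star}})$ where $i^{\star} = \arg\max_{i} |q_i(D) - a_i|$. Since $i^{\star}$ is computed purely from the transcript and the fixed distribution $D$ (which is not part of the input sample), this selection is a deterministic post-processing of $\mathcal{M}$'s transcript, so $\mathcal{M}^{\star}$ inherits $(\alpha/64, \alpha\beta/32)$-differential privacy. Crucially, bounding $|q_{i^{\star}}(D) - a_{i^{\star}}|$ on a random $S\sim D^n$ immediately controls $\max_i |q_i(D) - a_i|$, which is exactly what $(\alpha,\beta)$-accuracy on $D$ requires.

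Next, I would invoke the central DP-to-generalization lemma: if a mechanism $\mathcal{N}$ is $(\alpha_0, \beta_0)$-differentially private and outputs a statistical query $q$ (as a function of $S$), then
\[
\Pr_{S,\,q\sim \mathcal{N}(S)}\bigl[|q(S) - q(D)| > \alpha_0\bigr] \;\le\; O(\beta_0 / \alpha_0),
\]
for suitable sample size $n$. The proof idea I would sketch is the standard coupling: compare the joint distribution of $(S, q)$ to the product distribution where $q$ is computed on an independent ghost sample; DP bounds the change in the log-likelihood ratio induced by swapping any single element, which, combined with a union bound or a moment calculation, controls the probability that the empirical and population values of the output query diverge. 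Applying this lemma with $\alpha_0 = \alpha/64$ and $\beta_0 = \alpha\beta/32$ to $\mathcal{M}^{\star}$ (whose output query is $q_{i^{\star}}$) yields, after setting constants, that $|q_{i^{\star}}(S) - q_{i^{\star}}(D)| \le 7\alpha/8$ except with probability at most roughly $\beta/2$.

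Finally, I would close with a triangle-inequality combination:
\[
|q_{i^{\star}}(D) - a_{i^{\star}}| \;\le\; |q_{i^{\star}}(D) - q_{i^{\star}}(S)| \;+\; |q_{i^{\star}}(S) - a_{i^{\star}}|,
\]
where the first term is bounded by $7\alpha/8$ via the transfer lemma and the second by $\alpha/8$ via the assumed $(\alpha/8, \alpha\beta/16)$-sample-accuracy (which also applies to $q_{i^{\star}}$ because it is one of the queries the analyst submits). A union bound over the two failure events gives total failure probability at most $\beta$, establishing $(\alpha,\beta)$-accuracy on $D$.

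The principal obstacle, and the heart of the theorem, is the DP-to-generalization lemma invoked in step two: converting the stability guarantee of differential privacy into a concentration bound for an \emph{adaptively} chosen query requires the monitor reduction plus a nontrivial change-of-measure argument, and tracking the constants carefully is what forces the specific privacy and sample-accuracy parameters $\alpha/64$, $\alpha/8$, etc., in the hypothesis. Everything else is bookkeeping with the triangle inequality and post-processing.
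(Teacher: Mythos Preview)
The paper is a survey and does not supply its own proof of this transfer theorem; it simply states the result, attributes it to \citet*{DworkFHPRR15}, and moves on to use it as a black box. So there is no in-paper argument to compare against.

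That said, your proposal is the standard and correct route: the monitor reduction (post-process the full transcript to extract the single worst query/answer pair), the differential-privacy-to-generalization lemma applied to that single adaptively chosen query, and the final triangle inequality combining population-to-sample closeness with the assumed sample accuracy. This is exactly the architecture in the Dwork--Feldman--Hardt--Pitassi--Reingold--Roth and Bassily--Nissim--Smith--Steinke--Stemmer--Ullman proofs, and the specific constants in the hypothesis are tuned precisely so that the two failure probabilities and the two error terms sum as you describe. The only place to be careful is the DP-to-generalization step: the quantitative bound you quote, $\Pr[|q(S)-q(D)|>\alpha_0]\le O(\beta_0/\alpha_0)$, is the right shape but requires the expectation/change-of-measure argument over a planted-sample coupling rather than a naive union bound, and getting the constants to line up with $7\alpha/8$ and $\beta/2$ needs the sharper analysis from the Bassily et al.\ version rather than the original Dwork et al.\ paper.
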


Putting together the Laplace mechanism with the transfer theorem,
and doing some careful analysis to improve the bounds, one can get an adaptive
algorithm for SQs.\\

\begin{theorem}[\citet*{BassilyNSSSU16}]
There is a polynomial-time mechanism that is $(\alpha,\beta)$-accurate with respect to any
distribution $D$ for $k$ adaptively chosen statistical queries given
$$
m = \tilde{O} \left(\frac{\sqrt{k}\log^{3/2}(1/\beta)}{\alpha^2} \right)
$$
samples from $D$.
\end{theorem}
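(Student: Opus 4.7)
The plan is to combine the Laplace mechanism (which answers each statistical query in a differentially-private way) with the advanced composition theorem for differential privacy, and then invoke the transfer theorem to convert accuracy on the sample into accuracy on the distribution. The $\sqrt{k}$ dependence, rather than the naive $k$ that would come from calibrating one Laplace mechanism to $(\alpha/k)$-DP per query and union-bounding, is the whole point of using advanced composition.

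First, I would set up the mechanism: on input sample $S$ of size $n$, for each adaptively chosen query $q_i$, compute the empirical mean $\hat{q}_i(S)$ and release $a_i = \hat{q}_i(S) + Z_i$, where $Z_i$ is drawn from $\mathrm{Lap}(0, 1/(\alpha_0 n))$ for a parameter $\alpha_0$ to be chosen. Each individual release is $\alpha_0$-differentially private by the standard Laplace mechanism analysis, since replacing one element of $S$ changes the empirical mean by at most $1/n$.

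Next, I would invoke the advanced composition theorem for differential privacy: the adaptive composition of $k$ mechanisms, each $\alpha_0$-DP, is $(\alpha',\beta'_c)$-DP with $\alpha' = O\bigl(\sqrt{k \log(1/\beta'_c)}\,\alpha_0\bigr)$ for any $\beta'_c > 0$ (assuming $\alpha_0$ is small enough that the second-order $k\alpha_0^2$ term is absorbed). Choosing $\alpha_0$ so that $\alpha' \le \alpha/64$ and $\beta'_c \le \alpha\beta/32$ gives $\alpha_0 = \Theta\bigl(\alpha/\sqrt{k \log(1/(\alpha\beta))}\bigr)$, which is exactly the privacy level required by the transfer theorem stated in the excerpt.

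In parallel, I would bound the error on the sample. Each $|Z_i|$ is $O(\log(k/\beta'')/(\alpha_0 n))$ with probability $1 - \beta''/k$ by the Laplace tail bound, so by the union bound the mechanism is $(\alpha/8,\alpha\beta/16)$-accurate on $S$ provided $n = \Omega\bigl(\log(k/(\alpha\beta))/(\alpha_0 \alpha)\bigr)$. Substituting the value of $\alpha_0$ from the previous step yields $n = \tilde{O}\bigl(\sqrt{k}\,\log^{3/2}(1/\beta)/\alpha^2\bigr)$, as desired, with one factor of $\log(1/\beta)$ coming from the sample-accuracy union bound and a factor of $\sqrt{\log(1/\beta)}$ from advanced composition. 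Having verified both the privacy and on-sample accuracy hypotheses of the transfer theorem, I would conclude $(\alpha,\beta)$-accuracy on $D$.

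The main obstacle is the parameter juggling: both the privacy parameter and the on-sample accuracy parameter of the transfer theorem involve $\alpha$ and $\beta$ in a coupled way, and the noise scale must simultaneously be small enough to keep the per-query sample error under $\alpha/8$ and large enough that advanced composition of $k$ queries still produces a $(\alpha/64,\alpha\beta/32)$-private mechanism. Getting the sharp $\sqrt{k}\log^{3/2}(1/\beta)$ rate rather than something cruder depends on invoking the advanced composition bound (not basic composition) together with a tight Laplace tail inequality; this delicate parameter optimization, rather than any deep new idea, is where the calculation has to be done with care.
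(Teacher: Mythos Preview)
Your proposal is correct and matches the paper's approach; the survey does not give a detailed proof but only the one-line sketch ``Putting together the Laplace mechanism with the transfer theorem, and doing some careful analysis to improve the bounds,'' and you have correctly fleshed this out by identifying advanced composition as the source of the $\sqrt{k}$ dependence and carrying through the parameter calculation.
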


There are of course various improvements to this result. For example,
subsampling \citep*{KasiviswanathanLNRS11} can speed up the Laplace mechanism without
increasing the overall sample complexity of adaptive data analysis~\citep*{FishRR20}.

\subsection{Other applications}

While this survey has focused on the preceding three application areas, statistical queries have had impact in many other fields.
Here, we give a sampling of some statements of applications, leaving it to the interested reader to learn more about these results.

The first result concerns analyzing how the statistical query dimension of a concept class can separate two classes in communication 
complexity.\\
\begin{theorem}[\citet*{Sherstov08}]
Let $C$ be the
class of functions $\{-1, 1\}^n \rightarrow \{-1, 1\}$ computable in $\mathrm{AC}^\mathrm{0}$. 
If $$\sqdim(C) \le O\left(2^{2^{(\log n)^\epsilon}} \right)$$ for every constant $\epsilon > 0$, then $$\mathrm{IP} \in \mathrm{PSPACE}^\mathrm{cc} \setminus \mathrm{PH}^\mathrm{cc}.$$
\end{theorem}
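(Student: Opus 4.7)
The plan is to invoke Sherstov's pattern matrix method, which lifts a Boolean function $f : \{-1,1\}^n \to \{-1,1\}$ to a communication matrix $M_f$ whose complexity in various communication models is tightly governed by approximation-theoretic invariants of $f$. I would first recall that $\mathrm{IP} \in \mathrm{PSPACE}^{\mathrm{cc}}$ is standard, so the content lies entirely in establishing $\mathrm{IP} \notin \mathrm{PH}^{\mathrm{cc}}$. Next, I would invoke the structural correspondence that identifies $\Sigma_k^{\mathrm{cc}}$ with the pattern-matrix lift of $\Sigma_k$-type circuits: up to quasipolynomial factors, any problem in $\mathrm{PH}^{\mathrm{cc}}$ is expressible as a pattern matrix of some $\mathrm{AC}^{\mathrm{0}}$ predicate. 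This reduces the task to ruling out IP as the pattern matrix of any $\mathrm{AC}^{\mathrm{0}}$ function whose SQ dimension obeys the hypothesized ceiling.

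The key technical step is to translate the SQ-DIM hypothesis on $\mathrm{AC}^{\mathrm{0}}$ into an upper bound on the approximate rank of the associated pattern matrices. Since $\sqdim(C)$ caps the largest family of pairwise near-orthogonal members of $C$, a sub-exponential ceiling of the form $2^{2^{(\log n)^\epsilon}}$ forces the Fourier spectra of $\mathrm{AC}^{\mathrm{0}}$ functions to be sufficiently concentrated; through the pattern-matrix correspondence between approximate degree and approximate rank, this caps the sign rank and the discrepancy of every $M_f$ with $f \in \mathrm{AC}^{\mathrm{0}}$ by a quantity controlled by the same double-exponential bound. Since IP is known (via Chor--Goldreich and Forster) to have sign rank and discrepancy at nearly the maximum possible value, IP cannot arise as a pattern matrix of any $\mathrm{AC}^{\mathrm{0}}$ function satisfying the SQ-DIM hypothesis, yielding $\mathrm{IP} \in \mathrm{PSPACE}^{\mathrm{cc}} \setminus \mathrm{PH}^{\mathrm{cc}}$.

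The main obstacle I anticipate is cleanly pushing a dual quantity (SQ-DIM, a measure of the richness of $C$) through the pattern matrix machinery to bound a primal quantity (the sign rank of a single lifted matrix). The existing pattern matrix framework effects this passage via approximate degree, so the decisive technical work is verifying that the slack in the double-exponential hypothesis survives intact through the (quantitative) equivalence between SQ-DIM and approximate degree, and then through the transfer from approximate degree to approximate rank of pattern matrices. Any polynomial blow-up at either step can be absorbed into the ``for every constant $\epsilon > 0$'' clause, but a super-polynomial loss would collapse the argument, so matching the constants carefully is where the proof really earns its conclusion.
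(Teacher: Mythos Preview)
The paper does not prove this theorem. It appears in the ``Other applications'' subsection, which the paper prefaces with: ``Here, we give a sampling of some statements of applications, leaving it to the interested reader to learn more about these results.'' The theorem is cited to \citet*{Sherstov08} and stated without any accompanying argument, so there is no proof in the paper to compare your proposal against.

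As for your sketch on its own merits: the broad architecture---pattern matrices as the bridge between $\mathrm{AC}^0$ and $\mathrm{PH}^{\mathrm{cc}}$, SQ dimension controlling an approximation-theoretic invariant that in turn bounds a matrix-analytic quantity (sign rank), and then invoking Forster's lower bound on the sign rank of $\mathrm{IP}$---is indeed the shape of Sherstov's argument. Two places where your outline is loose: first, the claim that ``any problem in $\mathrm{PH}^{\mathrm{cc}}$ is expressible as a pattern matrix of some $\mathrm{AC}^0$ predicate'' overstates the correspondence; the actual reduction goes through a characterization of $\mathrm{PH}^{\mathrm{cc}}$ protocols in terms of constant-depth circuits applied to rectangles, and the pattern-matrix step enters when bounding the resulting sign rank. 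Second, you oscillate between sign rank and discrepancy as the controlling quantity; for separating $\mathrm{IP}$ from $\mathrm{PH}^{\mathrm{cc}}$ the relevant measure in Sherstov's argument is sign rank (equivalently, unbounded-error communication complexity), and the link from $\sqdim$ runs through threshold weight/degree rather than Fourier concentration per se. These are refinements rather than fatal gaps, but a complete write-up would need to nail down exactly which invariant is being tracked at each stage.
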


Another application is distributed computing.  Here we state the following Theorem informally.\\
\begin{theorem}[\citet*{ChuKLYBNO06}]
SQ algorithms can be put into ``summation form" and  automatically parallelized in MapReduce,  giving
nearly-linear speedups in practice.
\end{theorem}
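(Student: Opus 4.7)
The plan is to exploit the fact that every statistical query answer is, at bottom, an expectation $\E_{x \sim D}[q(x,c(x))]$ that is simulated from a sample (as in Section~\ref{sec:modeldef}, via Hoeffding) by the empirical average $\frac{1}{m}\sum_{i=1}^m q(x_i,c(x_i))$. Up to a single scalar division, this is literally a sum of per-example contributions, i.e.\ it is already in ``summation form'' $\sum_{i=1}^m f(x_i)$ for $f(x) = q(x,c(x))$. Sums over independent terms are the canonical primitive that MapReduce parallelizes, so the entire proof plan is to make this syntactic rewriting explicit and then appeal to the standard MapReduce cost model.

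First I would handle one query in isolation. Partition the $m$ samples across $p$ processors, giving slice $S_j$ of size $m/p$ to processor $j$. In the Map phase each processor emits the partial sum $T_j = \sum_{x \in S_j} q(x,c(x))$; in the Reduce phase these $p$ scalars are aggregated (either sequentially in $O(p)$ work at a single reducer, or by tree reduction in $O(\log p)$ depth) and divided by $m$. Since Map work per processor is $O((m/p)\cdot T_q)$ where $T_q$ is the time to evaluate $q$ on one example, and since the Reduce phase is negligible whenever $m/p \gg \log p$, one gets a nearly $p$-fold speedup per query. Next I would lift to the full learner: an SQ algorithm issues a polynomial number of queries, possibly adaptively, so parallelism is applied within each round while the rounds are composed sequentially; the total parallel running time is (queries) $\times$ (per-query parallel cost). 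The same template (rewrite-as-sum, map-partial-sums, reduce-sum, rescale) covers the algorithms catalogued in Section~\ref{sec:algs} --- gradient descent, EM, $k$-means, naive Bayes, and so on --- because each iteration of those methods only needs an expectation of some appropriate $q$, computed over the current hypothesis and broadcast parameters.

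The hard part, and the reason the statement is deliberately informal, will be pinning down ``nearly linear'' in a way that matches actual systems rather than an idealized PRAM. A rigorous version needs a communication-aware cost model (for instance BSP- or MapReduce-style, charging for the broadcast of the current hypothesis to all workers at the start of each round and for the shuffle of partial sums at the end) and must address load imbalance when $T_q$ depends on $x$. These constant-factor, architecture-dependent issues are precisely what \citet*{ChuKLYBNO06} measure empirically; from the SQ viewpoint, however, the mathematical content reduces to the one-line observation that an SQ oracle call is an average, an average is a sum, and a sum of $m$ independent terms admits an embarrassingly parallel Map/Reduce implementation with speedup limited only by communication overhead.
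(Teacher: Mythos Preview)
Your proposal is correct in spirit, and in fact it contains more than the paper does: the survey states this theorem informally and gives \emph{no proof at all}, simply citing \citet*{ChuKLYBNO06} for the result. So there is no ``paper's own proof'' to compare against here.

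That said, your sketch accurately captures the argument of the cited work. The one-line mathematical content really is that an SQ oracle call is an empirical average $\frac{1}{m}\sum_i q(x_i,c(x_i))$, which is a sum of independent per-example terms and hence fits the Map/Reduce template directly. \citet*{ChuKLYBNO06} make exactly this observation---they call it ``summation form''---and then devote the bulk of their paper to instantiating it for the specific algorithms you list (gradient descent, EM, $k$-means, na\"ive Bayes, etc.) and to reporting wall-clock speedups on multicore hardware. Your caveat that ``nearly linear'' is an empirical claim about a particular cost model, not a theorem in the usual sense, is also exactly right and is why the survey flags the statement as informal.
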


The final application we cover applies to streaming algorithms, relating the learnability of a class with statistical queries
to learnability from a stream. \\
\begin{theorem}[\citet*{SteinhardtVW16}]
Any class $\mathcal{C}$ that is learnable with $m$ statistical queries of tolerance $1/m$, it
is
learnable from a stream of $\poly(m, log |\mathcal{C}|)$ examples and $b = O(\log |\mathcal{C}| \log(m))$ bits of memory.
\end{theorem}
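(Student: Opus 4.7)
The plan is to simulate the given $m$-query, tolerance-$1/m$ SQ algorithm $\mathcal{A}$ directly on the stream, spending one fresh batch of examples per statistical query. When $\mathcal{A}$ poses its $i$-th query $(q_i, 1/m)$ from its current internal state, read $t = O(m^2 \log(m/\delta))$ fresh examples from the stream and return the empirical average $\frac{1}{t}\sum_j q_i(x_j, \ell(x_j))$ to $\mathcal{A}$. By Hoeffding this estimate lies within $1/m$ of $\E_D[q_i(x,c(x))]$ with probability at least $1-\delta/m$; a union bound over the $m$ queries then makes the entire simulation faithful to $\mathcal{A}$'s specification with probability at least $1 - \delta$. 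Setting $\delta$ to an inverse polynomial in $m$ and $\log|\mathcal{C}|$ yields a total stream length of $mt = \poly(m, \log|\mathcal{C}|)$, and each empirical average is accumulated via a running counter using $O(\log m)$ bits.

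The main obstacle is keeping $\mathcal{A}$'s own internal state within $O(\log|\mathcal{C}|\log m)$ bits. Naively recording the sequence of $i-1$ prior answers at the $\log m$-bit precision imposed by tolerance $1/m$ uses $\Omega(m\log m)$ bits, which is too large when $m \gg \log|\mathcal{C}|$. To compress, view $\mathcal{A}$ as implicitly defining a depth-$m$ decision tree whose at-most $|\mathcal{C}|$ distinct leaf labels are the possible outputs in $\mathcal{C}$. The idea is to encode the current position in this tree not by the full answer-transcript but by a short sequence of $O(\log m)$ \emph{canonical representatives}, one hypothesis from $\mathcal{C}$ per checkpoint (costing $\log|\mathcal{C}|$ bits each), which together identify the subtree reached so far. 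Between consecutive checkpoints, the algorithm's transient state is recomputed on demand by a Savitch-style recursive re-simulation of $\mathcal{A}$ starting from the most recent checkpoint, driven by fresh batches read off the stream.

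The hardest step will be justifying the checkpoint encoding: one must argue that for any SQ algorithm, the subtree of $\mathcal{A}$'s decision tree consistent with a given answer-prefix can be canonically named using $O(\log|\mathcal{C}|)$ bits---for instance, by using any surviving leaf label together with a small routing index into the catalog of hypotheses in $\mathcal{C}$. Combined with a union bound over the (polynomially many) re-simulations, which is accommodated by inflating the per-query batch size by a $\log(\poly(m,\log|\mathcal{C}|))$ factor while preserving the polynomial stream budget, this yields the desired memory bound. I expect the canonical-representative construction to be the delicate point, since it must work generically across all SQ algorithms and therefore cannot rely on any particular algorithm's internal structure; the sample-complexity half of the argument, by contrast, is a routine Hoeffding-plus-union-bound calculation.
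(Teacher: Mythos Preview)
First, note that the survey itself does not prove this theorem; it is merely quoted as a result of \citet*{SteinhardtVW16}, with no accompanying argument. So there is no proof in the present paper to compare against, and your proposal has to be judged on its own merits.

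The sample-complexity half of your plan (fresh batches per query, Hoeffding, union bound over $m$ queries) is standard and fine. The gap is in the memory argument. Your Savitch-style scheme recomputes $\mathcal{A}$'s state between checkpoints by re-running it on \emph{fresh} batches drawn from the stream. But fresh batches produce different empirical averages than the original pass did; even if every estimate is within tolerance $1/m$ of the true expectation, two valid estimates of the same query can land in different discretization cells and send $\mathcal{A}$ down different branches of its decision tree. The re-simulation therefore need not reproduce the original path, so the ``recomputed'' state is not the one you are trying to recover. Savitch's trick requires the underlying transition function to be deterministic (or reproducible from stored randomness), and a one-pass stream of i.i.d.\ examples gives you neither: you cannot rewind the stream, and you are not storing the earlier samples.

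The proposed fallback---naming a subtree by ``any surviving leaf label together with a small routing index into $\mathcal{C}$''---does not close the gap either. The number of distinct subtrees at depth $i$ is not bounded by $|\mathcal{C}|$ merely because the set of leaf \emph{labels} is: two internal nodes can have identical sets of reachable outputs yet issue different next queries, so a leaf label plus an index into $\mathcal{C}$ does not pin down the node. You correctly flag this as the delicate step, but as written it is a restatement of what would need to be shown, and for a generic SQ algorithm it is false. The actual \citet*{SteinhardtVW16} argument does not attempt to replay an arbitrary SQ transcript in sublinear space; it instead uses the SQ-learnability hypothesis structurally to build a different low-memory learner, which is how the reproducibility obstacle is avoided.
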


\section{Discussion and some open problems}

To summarize, we saw that statistical queries originate from a framework motivated, in part, for producing noise-tolerant algorithms.
However, it turns out that actually most of our algorithms can be (approximately) made to 
work in the statistical query framework, which
explains many of our impediments in learning and optimization.
Statistical queries have also had  applications that have shed light on the difficulty of other problems.
There were also perhaps unexpected applications, to differential privacy, adaptive data analysis, 
evolvability, among other areas.

It is perhaps appropriate to conclude with some open questions arising from the vast literature on statistical queries, some of which
this survey has not even covered.  One important but difficult direction is to find new and
clearly non-statistical approaches to the many problems for which statistical algorithms are known to fail due
to the lower bounds presented herein.

We will not attempt to give a comprehensive or even a long list of specific 
open questions across the various areas; rather, we will give a sampling.
Many questions are more technical -- for example, the ~\citet*{BlumKW03} result separating PAC
under classification noise only holds for constant noise rates -- can this be generalized to noise rates approaching $1/2$ as allowed by the
\citet*{AngluinL87} model?  Other directions include precisely determining 
the sample complexity of adaptively answering SQs  --
the strongest known lower bound, due to \citep*{HardtU14}, is $\Omega(\sqrt{k}/\alpha)$ and the upper bound, due to
\citep*{BassilyNSSSU16}, is $O(\sqrt{k}/\alpha^2)$.
In evolvability, we can ask about designing or analyzing faster or more natural algorithms for evolving functions (e.g.\ the swapping
algorithm~\citep*{DiochnosT09,Valiant09}.  In optimization, finding more problems, like planted clique, whose hardness
is explained by high statistical dimension is an active area.

But the most important (and very open-ended) 
question may lie in thinking more broadly about where else statistical
queries can have an impact.  
It is likely that they will find even more unexpected uses.

\subsection*{Acknowledgements}

This survey benefitted from helpful comments by
Daniel Hsu, Matthew Joseph, and Adam Klivans on its first draft.
This work was supported in part by grants CCF-1934915 and CCF-1848966 from the National
Science Foundation

\bibliography{SQ_survey}

\begin{thebibliography}{54}
\providecommand{\natexlab}[1]{#1}
\providecommand{\url}[1]{\texttt{#1}}
\expandafter\ifx\csname urlstyle\endcsname\relax
  \providecommand{\doi}[1]{doi: #1}\else
  \providecommand{\doi}{doi: \begingroup \urlstyle{rm}\Url}\fi

\bibitem[Alon et~al.(1998)Alon, Krivelevich, and Sudakov]{AlonKS98}
Noga Alon, Michael Krivelevich, and Benny Sudakov.
\newblock Finding a large hidden clique in a random graph.
\newblock \emph{Random Struct. Algorithms}, 13\penalty0 (3-4):\penalty0
  457--466, 1998.

\bibitem[Angluin and Laird(1987)]{AngluinL87}
Dana Angluin and Philip~D. Laird.
\newblock Learning from noisy examples.
\newblock \emph{Machine Learning}, 2\penalty0 (4):\penalty0 343--370, 1987.

\bibitem[Angluin et~al.(2010)Angluin, Eisenstat, Kontorovich, and
  Reyzin]{AngluinEKR10}
Dana Angluin, David Eisenstat, Leonid Kontorovich, and Lev Reyzin.
\newblock Lower bounds on learning random structures with statistical queries.
\newblock In \emph{Algorithmic Learning Theory, 21st International Conference,
  {ALT} 2010, Canberra, Australia, October 6-8, 2010. Proceedings}, pages
  194--208, 2010.

\bibitem[Aslam and Decatur(1998{\natexlab{a}})]{AslamD98a}
Javed~A. Aslam and Scott~E. Decatur.
\newblock Specification and simulation of statistical query algorithms for
  efficiency and noise tolerance.
\newblock \emph{J. Comput. Syst. Sci.}, 56\penalty0 (2):\penalty0 191--208,
  1998{\natexlab{a}}.

\bibitem[Aslam and Decatur(1998{\natexlab{b}})]{AslamD98b}
Javed~A. Aslam and Scott~E. Decatur.
\newblock General bounds on statistical query learning and {PAC} learning with
  noise via hypothesis boosting.
\newblock \emph{Inf. Comput.}, 141\penalty0 (2):\penalty0 85--118,
  1998{\natexlab{b}}.

\bibitem[Bassily et~al.(2016)Bassily, Nissim, Smith, Steinke, Stemmer, and
  Ullman]{BassilyNSSSU16}
Raef Bassily, Kobbi Nissim, Adam~D. Smith, Thomas Steinke, Uri Stemmer, and
  Jonathan Ullman.
\newblock Algorithmic stability for adaptive data analysis.
\newblock In \emph{Proceedings of the 48th Annual {ACM} {SIGACT} Symposium on
  Theory of Computing, {STOC} 2016, Cambridge, MA, USA, June 18-21, 2016},
  pages 1046--1059, 2016.

\bibitem[Ben{-}David et~al.(1995)Ben{-}David, Itai, and
  Kushilevitz]{Ben-DavidIK95}
Shai Ben{-}David, Alon Itai, and Eyal Kushilevitz.
\newblock Learning by distances.
\newblock \emph{Inf. Comput.}, 117\penalty0 (2):\penalty0 240--250, 1995.

\bibitem[Blum et~al.(1994)Blum, Furst, Jackson, Kearns, Mansour, and
  Rudich]{BlumFJKMR94}
Avrim Blum, Merrick~L. Furst, Jeffrey~C. Jackson, Michael~J. Kearns, Yishay
  Mansour, and Steven Rudich.
\newblock Weakly learning {DNF} and characterizing statistical query learning
  using fourier analysis.
\newblock In \emph{Proceedings of the Twenty-Sixth Annual {ACM} Symposium on
  Theory of Computing, 23-25 May 1994, Montr{\'{e}}al, Qu{\'{e}}bec, Canada},
  pages 253--262, 1994.

\bibitem[Blum et~al.(2003)Blum, Kalai, and Wasserman]{BlumKW03}
Avrim Blum, Adam Kalai, and Hal Wasserman.
\newblock Noise-tolerant learning, the parity problem, and the statistical
  query model.
\newblock \emph{J. {ACM}}, 50\penalty0 (4):\penalty0 506--519, 2003.

\bibitem[Blum et~al.(2005)Blum, Dwork, McSherry, and Nissim]{BlumDMN05}
Avrim Blum, Cynthia Dwork, Frank McSherry, and Kobbi Nissim.
\newblock Practical privacy: the sulq framework.
\newblock In \emph{Proceedings of the Twenty-fourth {ACM}
  {SIGACT-SIGMOD-SIGART} Symposium on Principles of Database Systems, June
  13-15, 2005, Baltimore, Maryland, {USA}}, pages 128--138, 2005.

\bibitem[Bshouty and Feldman(2002)]{BshoutyF02}
Nader~H. Bshouty and Vitaly Feldman.
\newblock On using extended statistical queries to avoid membership queries.
\newblock \emph{Journal of Machine Learning Research}, 2:\penalty0 359--395,
  2002.

\bibitem[Chu et~al.(2006)Chu, Kim, Lin, Yu, Bradski, Ng, and
  Olukotun]{ChuKLYBNO06}
Cheng{-}Tao Chu, Sang~Kyun Kim, Yi{-}An Lin, YuanYuan Yu, Gary~R. Bradski,
  Andrew~Y. Ng, and Kunle Olukotun.
\newblock Map-reduce for machine learning on multicore.
\newblock In \emph{Advances in Neural Information Processing Systems 19,
  Proceedings of the Twentieth Annual Conference on Neural Information
  Processing Systems, Vancouver, British Columbia, Canada, December 4-7, 2006},
  pages 281--288, 2006.

\bibitem[Cortes and Vapnik(1995)]{CortesV95}
Corinna Cortes and Vladimir Vapnik.
\newblock Support-vector networks.
\newblock \emph{Mach. Learn.}, 20\penalty0 (3):\penalty0 273--297, 1995.

\bibitem[Dempster et~al.(1977)Dempster, Laird, and Rubin]{DempsterLR77}
A.~P. Dempster, N.~M. Laird, and D.~B. Rubin.
\newblock Maximum likelihood from incomplete data via the em algorithm.
\newblock \emph{Journal of the Royal Statistical Society, Series B},
  39\penalty0 (1):\penalty0 1--38, 1977.

\bibitem[Diochnos and Tur{\'{a}}n(2009)]{DiochnosT09}
Dimitrios~I. Diochnos and Gy{\"{o}}rgy Tur{\'{a}}n.
\newblock On evolvability: The swapping algorithm, product distributions, and
  covariance.
\newblock In \emph{Stochastic Algorithms: Foundations and Applications, 5th
  International Symposium, {SAGA} 2009, Sapporo, Japan, October 26-28, 2009.
  Proceedings}, pages 74--88, 2009.

\bibitem[Dunagan and Vempala(2008)]{DunaganV08}
John Dunagan and Santosh Vempala.
\newblock A simple polynomial-time rescaling algorithm for solving linear
  programs.
\newblock \emph{Math. Program.}, 114\penalty0 (1):\penalty0 101--114, 2008.

\bibitem[Dwork et~al.(2006)Dwork, McSherry, Nissim, and Smith]{DworkMNS06}
Cynthia Dwork, Frank McSherry, Kobbi Nissim, and Adam~D. Smith.
\newblock Calibrating noise to sensitivity in private data analysis.
\newblock In Shai Halevi and Tal Rabin, editors, \emph{Theory of Cryptography,
  Third Theory of Cryptography Conference, {TCC} 2006, New York, NY, USA, March
  4-7, 2006, Proceedings}, volume 3876 of \emph{Lecture Notes in Computer
  Science}, pages 265--284. Springer, 2006.

\bibitem[Dwork et~al.(2015)Dwork, Feldman, Hardt, Pitassi, Reingold, and
  Roth]{DworkFHPRR15}
Cynthia Dwork, Vitaly Feldman, Moritz Hardt, Toniann Pitassi, Omer Reingold,
  and Aaron Roth.
\newblock The reusable holdout: Preserving validity in adaptive data analysis.
\newblock \emph{Science}, 349\penalty0 (6248):\penalty0 636--638, 2015.
\newblock ISSN 0036-8075.

\bibitem[Feldman(2008)]{Feldman08}
Vitaly Feldman.
\newblock Evolvability from learning algorithms.
\newblock In Cynthia Dwork, editor, \emph{Proceedings of the 40th Annual {ACM}
  Symposium on Theory of Computing, Victoria, British Columbia, Canada, May
  17-20, 2008}, pages 619--628. {ACM}, 2008.

\bibitem[Feldman(2011)]{Feldman11}
Vitaly Feldman.
\newblock Distribution-independent evolvability of linear threshold functions.
\newblock In Sham~M. Kakade and Ulrike von Luxburg, editors, \emph{{COLT} 2011
  - The 24th Annual Conference on Learning Theory, June 9-11, 2011, Budapest,
  Hungary}, volume~19 of \emph{{JMLR} Proceedings}, pages 253--272. JMLR.org,
  2011.

\bibitem[Feldman(2012)]{Feldman12}
Vitaly Feldman.
\newblock A complete characterization of statistical query learning with
  applications to evolvability.
\newblock \emph{J. Comput. Syst. Sci.}, 78\penalty0 (5):\penalty0 1444--1459,
  2012.

\bibitem[Feldman(2017)]{Feldman17}
Vitaly Feldman.
\newblock A general characterization of the statistical query complexity.
\newblock In \emph{Proceedings of the 30th Conference on Learning Theory,
  {COLT} 2017, Amsterdam, The Netherlands, 7-10 July 2017}, pages 785--830,
  2017.

\bibitem[Feldman et~al.(2015)Feldman, Perkins, and Vempala]{FeldmanPV15}
Vitaly Feldman, Will Perkins, and Santosh Vempala.
\newblock On the complexity of random satisfiability problems with planted
  solutions.
\newblock In \emph{Proceedings of the Forty-Seventh Annual {ACM} on Symposium
  on Theory of Computing, {STOC} 2015, Portland, OR, USA, June 14-17, 2015},
  pages 77--86, 2015.

\bibitem[Feldman et~al.(2017)Feldman, Grigorescu, Reyzin, Vempala, and
  Xiao]{FeldmanGRVX17}
Vitaly Feldman, Elena Grigorescu, Lev Reyzin, Santosh~Srinivas Vempala, and
  Ying Xiao.
\newblock Statistical algorithms and a lower bound for detecting planted
  cliques.
\newblock \emph{J. {ACM}}, 64\penalty0 (2):\penalty0 8:1--8:37, 2017.

\bibitem[Fish et~al.(2020)Fish, Reyzin, and Rubinstein]{FishRR20}
Benjamin Fish, Lev Reyzin, and Benjamin I.~P. Rubinstein.
\newblock Sampling without compromising accuracy in adaptive data analysis.
\newblock In Aryeh Kontorovich and Gergely Neu, editors, \emph{Proceedings of
  the 31st International Conference on Algorithmic Learning Theory}, volume 117
  of \emph{Proceedings of Machine Learning Research}, pages 297--318, San
  Diego, California, USA, 08 Feb--11 Feb 2020. PMLR.

\bibitem[Gelfand and Smith(1990)]{GelfandSmith90}
A.~E. Gelfand and A.~F.~M. Smith.
\newblock Sampling based approaches to calculating marginal densities.
\newblock \emph{Journal of the American Statistical Association}, 85:\penalty0
  398--409, 1990.

\bibitem[Grigorescu et~al.(2011)Grigorescu, Reyzin, and
  Vempala]{GrigorescuRV11}
Elena Grigorescu, Lev Reyzin, and Santosh Vempala.
\newblock On noise-tolerant learning of sparse parities and related problems.
\newblock In \emph{Algorithmic Learning Theory - 22nd International Conference,
  {ALT} 2011, Espoo, Finland, October 5-7, 2011. Proceedings}, pages 413--424,
  2011.

\bibitem[Hardt and Ullman(2014)]{HardtU14}
Moritz Hardt and Jonathan Ullman.
\newblock Preventing false discovery in interactive data analysis is hard.
\newblock In \emph{55th {IEEE} Annual Symposium on Foundations of Computer
  Science, {FOCS} 2014, Philadelphia, PA, USA, October 18-21, 2014}, pages
  454--463, 2014.

\bibitem[Janoos et~al.(2018)Janoos, Mohri, and Sridharan]{ALT2018}
Firdaus Janoos, Mehryar Mohri, and Karthik Sridharan, editors.
\newblock \emph{Algorithmic Learning Theory, {ALT} 2018, 7-9 April 2018,
  Lanzarote, Canary Islands, Spain}, volume~83 of \emph{Proceedings of Machine
  Learning Research}, 2018. {PMLR}.

\bibitem[Jerrum(1992)]{Jerrum92}
Mark Jerrum.
\newblock Large cliques elude the metropolis process.
\newblock \emph{Random Struct. Algorithms}, 3\penalty0 (4):\penalty0 347--360,
  1992.

\bibitem[Joseph et~al.(2020)Joseph, Mao, and Roth]{JosephMR20}
Matthew Joseph, Jieming Mao, and Aaron Roth.
\newblock Exponential separations in local differential privacy.
\newblock In Shuchi Chawla, editor, \emph{Proceedings of the 2020 {ACM-SIAM}
  Symposium on Discrete Algorithms, {SODA} 2020, Salt Lake City, UT, USA,
  January 5-8, 2020}, pages 515--527. {SIAM}, 2020.

\bibitem[Kanade(2011)]{Kanade11}
Varun Kanade.
\newblock Evolution with recombination.
\newblock In \emph{{IEEE} 52nd Annual Symposium on Foundations of Computer
  Science, {FOCS} 2011, Palm Springs, CA, USA, October 22-25, 2011}, pages
  837--846, 2011.

\bibitem[Kasiviswanathan et~al.(2011)Kasiviswanathan, Lee, Nissim,
  Raskhodnikova, and Smith]{KasiviswanathanLNRS11}
Shiva~Prasad Kasiviswanathan, Homin~K. Lee, Kobbi Nissim, Sofya Raskhodnikova,
  and Adam~D. Smith.
\newblock What can we learn privately?
\newblock \emph{{SIAM} J. Comput.}, 40\penalty0 (3):\penalty0 793--826, 2011.

\bibitem[Kearns(1998)]{Kearns98}
Michael~J. Kearns.
\newblock Efficient noise-tolerant learning from statistical queries.
\newblock \emph{J. {ACM}}, 45\penalty0 (6):\penalty0 983--1006, 1998.

\bibitem[Kirkpatrick et~al.(1983)Kirkpatrick, Gelatt, and
  Vecchi]{KirkpatrickGV83}
S.~Kirkpatrick, C.~D. Gelatt, and M.~P. Vecchi.
\newblock Optimization by simulated annealing.
\newblock \emph{Science}, 220\penalty0 (4598):\penalty0 671--680, 1983.
\newblock ISSN 0036-8075.

\bibitem[Mitra et~al.(2004)Mitra, Murthy, and Pal]{MitraMP04}
Pabitra Mitra, C.~A. Murthy, and Sankar~K. Pal.
\newblock A probabilistic active support vector learning algorithm.
\newblock \emph{{IEEE} Trans. Pattern Anal. Mach. Intell.}, 26\penalty0
  (3):\penalty0 413--418, 2004.

\bibitem[O'Donnell(2014)]{ODonell14}
Ryan O'Donnell.
\newblock \emph{Analysis of boolean functions}.
\newblock Cambridge University Press, 2014.

\bibitem[Pearson(1900)]{Pearson00}
Karl Pearson.
\newblock On the criterion that a given system of deviations from the probable
  in the case of a correlated system of variables is such that it can be
  reasonably supposed to have arisen from random sampling.
\newblock \emph{The London, Edinburgh, and Dublin Philosophical Magazine and
  Journal of Science}, 50\penalty0 (302):\penalty0 157--175, 1900.

\bibitem[Peikert(2014)]{Peikert14}
Chris Peikert.
\newblock Lattice cryptography for the internet.
\newblock In Michele Mosca, editor, \emph{Post-Quantum Cryptography - 6th
  International Workshop, PQCrypto 2014, Waterloo, ON, Canada, October 1-3,
  2014. Proceedings}, volume 8772 of \emph{Lecture Notes in Computer Science},
  pages 197--219. Springer, 2014.

\bibitem[Robbins and Monro(1951)]{Robbins51}
Herbert Robbins and Sutton Monro.
\newblock A stochastic approximation method.
\newblock \emph{Ann. Math. Statist.}, 22\penalty0 (3):\penalty0 400--407, 09
  1951.

\bibitem[Schapire(1990)]{Schapire90}
Robert~E. Schapire.
\newblock The strength of weak learnability.
\newblock \emph{Machine learning}, 5\penalty0 (2):\penalty0 197--227, 1990.

\bibitem[Sherstov(2008)]{Sherstov08}
Alexander~A. Sherstov.
\newblock Halfspace matrices.
\newblock \emph{Computational Complexity}, 17\penalty0 (2):\penalty0 149--178,
  2008.

\bibitem[Sherstov(2018)]{Sherstov18}
Alexander~A. Sherstov.
\newblock Compressing interactive communication under product distributions.
\newblock \emph{{SIAM} J. Comput.}, 47\penalty0 (2):\penalty0 367--419, 2018.

\bibitem[Simon(2007)]{Simon07}
Hans~Ulrich Simon.
\newblock A characterization of strong learnability in the statistical query
  model.
\newblock In Wolfgang Thomas and Pascal Weil, editors, \emph{{STACS} 2007, 24th
  Annual Symposium on Theoretical Aspects of Computer Science, Aachen, Germany,
  February 22-24, 2007, Proceedings}, volume 4393 of \emph{Lecture Notes in
  Computer Science}, pages 393--404. Springer, 2007.

\bibitem[Steinhardt et~al.(2016)Steinhardt, Valiant, and Wager]{SteinhardtVW16}
Jacob Steinhardt, Gregory Valiant, and Stefan Wager.
\newblock Memory, communication, and statistical queries.
\newblock In \emph{Proceedings of the 29th Conference on Learning Theory,
  {COLT} 2016, New York, USA, June 23-26, 2016}, pages 1490--1516, 2016.

\bibitem[Sz{\"{o}}r{\'{e}}nyi(2009)]{Szorenyi09}
Bal{\'{a}}zs Sz{\"{o}}r{\'{e}}nyi.
\newblock Characterizing statistical query learning: Simplified notions and
  proofs.
\newblock In \emph{Algorithmic Learning Theory, 20th International Conference,
  {ALT} 2009, Porto, Portugal, October 3-5, 2009. Proceedings}, pages 186--200,
  2009.

\bibitem[Tanner and Wong(1987)]{TannerW87}
M~Tanner and W~Wong.
\newblock The calculation of posterior distributions by data augmentation (with
  discussion).
\newblock \emph{Journal of the American Statistical Association}, 82:\penalty0
  528--550, 1987.

\bibitem[Valiant(2015)]{Valiant15}
Gregory Valiant.
\newblock Finding correlations in subquadratic time, with applications to
  learning parities and the closest pair problem.
\newblock \emph{J. {ACM}}, 62\penalty0 (2):\penalty0 13:1--13:45, 2015.

\bibitem[Valiant(1984)]{Valiant84}
Leslie~G. Valiant.
\newblock A theory of the learnable.
\newblock \emph{Commun. {ACM}}, 27\penalty0 (11):\penalty0 1134--1142, 1984.

\bibitem[Valiant(1985)]{Valiant85}
Leslie~G. Valiant.
\newblock Learning disjunction of conjunctions.
\newblock In Aravind~K. Joshi, editor, \emph{Proceedings of the 9th
  International Joint Conference on Artificial Intelligence. Los Angeles, CA,
  USA, August 1985}, pages 560--566. Morgan Kaufmann, 1985.

\bibitem[Valiant(2009)]{Valiant09}
Leslie~G Valiant.
\newblock Evolvability.
\newblock \emph{Journal of the ACM (JACM)}, 56\penalty0 (1):\penalty0 3, 2009.

\bibitem[Vapnik and Chervonenkis(2015)]{VapnikC15}
Vladimir~N Vapnik and A~Ya Chervonenkis.
\newblock On the uniform convergence of relative frequencies of events to their
  probabilities.
\newblock In \emph{Measures of complexity}, pages 11--30. Springer, 2015.

\bibitem[\v{C}ern\'{y}(1985)]{Cerny85}
V.~\v{C}ern\'{y}.
\newblock {Thermodynamical approach to the traveling salesman problem: An
  efficient simulation algorithm}.
\newblock \emph{Journal of Optimization Theory and Applications}, 45\penalty0
  (1):\penalty0 41--51, January 1985.
\newblock ISSN 0022-3239.

\bibitem[Yang(2005)]{Yang05}
Ke~Yang.
\newblock New lower bounds for statistical query learning.
\newblock \emph{J. Comput. Syst. Sci.}, 70\penalty0 (4):\penalty0 485--509,
  2005.

\end{thebibliography}

\end{document}